\title[AAMAS-2025 Formatting Instructions]{Adaptive Budget Optimization for Multichannel Advertising Using Combinatorial Bandits}
\author{Briti Gangopadhyay}
\email{briti.gangopadhyay@sony.com}
\affiliation{%
  \institution{Sony}
  \country{Japan}
  }
\author{Zhao Wang}
\email{zhao.wang@sony.com}
\affiliation{%
  \institution{Sony}
  \country{Japan}
}
\author{Alberto Silvio Chiappa}
\email{alberto.chiappa@epfl.ch}
\affiliation{%
  \institution{Sony*\thanks{*Work done during internship at Sony}, EPFL}
  \country{Japan, Switzerland}
}
\author{Shingo Takamatsu}
\email{shingo.takamatsu@sony.com}
\affiliation{%
  \institution{Sony}
  \country{Japan}
}
\begin{abstract}
Effective budget allocation is crucial for optimizing the performance of digital advertising campaigns. However, the development of practical budget allocation algorithms remain limited, primarily due to the lack of public datasets and comprehensive simulation environments capable of verifying the intricacies of real-world advertising. While multi-armed bandit (MAB) algorithms have been extensively studied, their efficacy diminishes in non-stationary environments where quick adaptation to changing market dynamics is essential. In this paper, we advance the field of budget allocation in digital advertising by introducing three key contributions. First, we develop a simulation environment designed to mimic multichannel advertising campaigns over extended time horizons, incorporating logged real-world data. Second, we propose an enhanced combinatorial bandit budget allocation strategy that leverages a saturating mean function and a targeted exploration mechanism with change-point detection. This approach dynamically adapts to changing market conditions, improving allocation efficiency by filtering target regions based on domain knowledge. Finally, we present both theoretical analysis and empirical results, demonstrating that our method consistently outperforms baseline strategies, achieving higher rewards and lower regret across multiple real-world campaigns.
\end{abstract}
\keywords{Combinatorial Bandit, Non-stationarity, Digital Advertisement}
\newcommand{\BibTeX}{\rm B\kern-.05em{\sc i\kern-.025em b}\kern-.08em\TeX}
\begin{document}


\pagestyle{fancy}
\fancyhead{}


\maketitle 


\section{Introduction}

Digital advertising is a fast growing area of research, with the global market size approaching \$700 billion in 2024 and projected to surpass \$830 billion by 2026 \cite{10.1145/3637528.3671476}. In 2023, the average internet user spent around 6.5 hours daily engaging with content largely driven by advertisement. In the United States, digital advertising expenditure reached \$189 billion in 2021 \cite{iab1}, showing a significant 35\% year-over-year growth, driven in part by the COVID-19 pandemic. Despite economic challenges such as high inflation and rising interest rates, digital advertising continued to expand, reaching \$225 billion in 2023 \cite{iab2}. 

As the advertising sector continues to evolve, the number of sub-campaigns within a portfolio grows \cite{giglimulti}, driven by diversification across various formats (e.g., Search, Display, Video) and platforms (e.g., Google, Meta). To ensure profitability from delivering a diverse portfolio of campaigns, it is crucial to manage digital marketing budgets effectively (Fig \ref{budgetallocation}). This resource allocation problem has attracted significant interest from the machine learning community \cite{rangaswamy2005opportunities} as logged data can be procured from different business campaigns. The presence of rich features in this data further fuels the development of automated decision-making systems, as learning algorithms are often better equipped to interpret multidimensional tabular data than human intuition.

\begin{figure}[!h]
\centering
\includegraphics[width=0.45\textwidth]{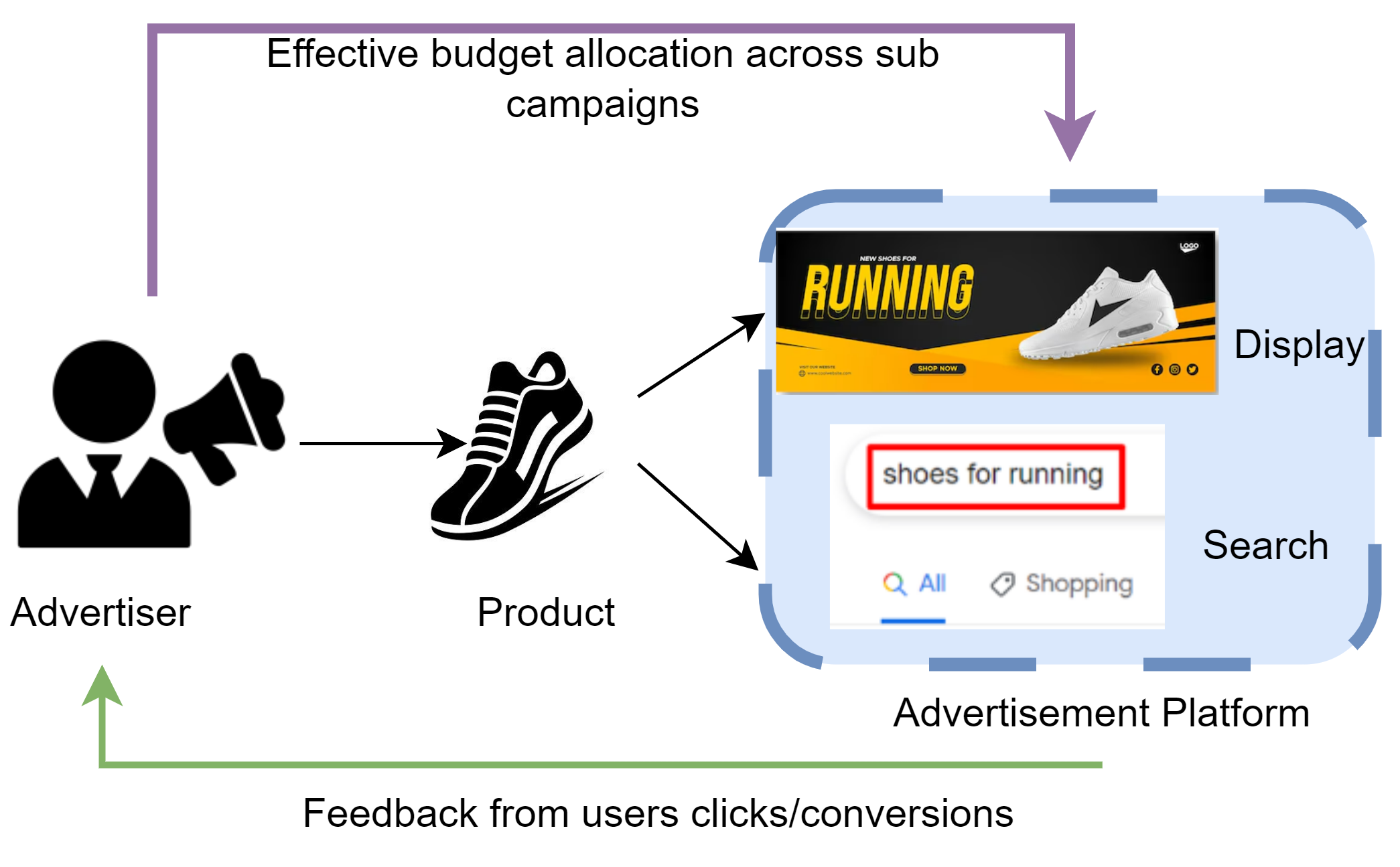}
\caption{Budget allocation across multiple sub campaigns in digital advertisement}
\label{budgetallocation}
\end{figure}

Research on budget allocation algorithms remain limited, despite its importance for advertisers. A well-planned spending strategy is crucial, as campaigns with inadequate budgets may struggle to reach high-quality traffic. Effective budget allocation can significantly boost Return on Ad Spend (ROAS) by ensuring that ads are displayed where users are most likely to engage \cite{nguyenpractical}. Multi-armed bandit strategies \cite{avadhanula2021stochastic, Nuara2018ACA, 10.5555/3618408.3618709} have proven highly effective for budget allocation due to their simplicity, ease of analysis, and practical implementation in real-world systems. However, these algorithms often suffer from inefficient exploration and may struggle to adapt to the evolving behavior of campaigns over extended periods. Non-stationarity is a frequent issue in online advertising environments \cite{italia2017internet}, where detecting changes and quickly adapting to them is critical.

A significant bottleneck in studying this important problem is the lack of rich open-source datasets and robust simulation environments. Business data is often proprietary, and the datasets used in previous research are typically not publicly accessible \cite{han2020contextual, 8545777, deng2023cross}, making it difficult to reproduce algorithmic results or build upon prior work. Moreover, directly testing budget allocation algorithms on real-world traffic can be both expensive and risky \cite{nguyen2023practical}. 

In this work we enhance the current budget allocation research with the following contributions:
\begin{itemize}
    \item We design a simulation environment that is capable of simulating logged data exhibiting characteristics of multichannel ad campaigns running over multiple months. To the best of our knowledge this is the first environment simulating non-stationary multichannel ad campaigns. The environment and data sets are publicly available \footnote{code contribution: \href{https://github.com/sony/ABA}{https://github.com/sony/ABA}} facilitating further exploration and development in this area.
    \item We enhance the combinatorial bandit budget allocation strategy with modified mean function and a novel exploration utility. The exploration utility accounts for campaign efficiency and filters target regions based on domain knowledge resulting in faster adaptation for long running non stationary campaigns. We also incorporate change point detection to adapt to changing market conditions.
    \item We theoretically show that the proposed method has sub-linear regret that is upper bounded by O($\sqrt T$) where T is the time horizon and reduces regret compared to standard exploration techniques. We empirically evaluate the proposed method on multiple real campaign data exhibiting higher reward, efficiency and lower regret compared with current SOTA baselines.
\end{itemize}
The paper is structured as follows: Section \ref{relatedwork} reviews related work, Section \ref{problem} introduces the problem formulation, Section \ref{prelim} covers the preliminaries, Section \ref{environment} presents the simulation environment, Section \ref{ABA} discusses the algorithm and provides a theoretical analysis, and Section \ref{empirical} reports the empirical results.


\section{Related Work}
\label{relatedwork}
Budget allocation across multiple ad campaigns \cite{farris2015marketing, 10.5555/3618408.3618709} has been extensively studied in industrial research by companies like Criteo \cite{DElia2019}, Netflix \cite{lewis2022incrementality}, and Lyft \cite{hancontextual}. A common approach is to discretize the budget and model each sub-campaign as an arm in a multi-armed bandit problem. The optimal allocation is obtained by solving a combinatorial optimization problem \cite{zhang2017multi}  based on the expected reward of each arm. In previous literature, domain knowledge has been used to formulate parametric models of the arms, approximating the cost-to-reward function with a power law \cite{hanbudget} or a sigmoid \cite{giglimulti}, followed by Thompson Sampling to handle uncertainty and induce exploration. However, these methods often overlook noise in the data, a critical factor in real-world deployments. In the presence of noise, parametric models can significantly deviate from the true reward function. A more flexible alternative is to model the reward function using Gaussian Process (GP) models \cite{nuara2022online, Nuara2018ACA}, which allow for greater adaptability. These algorithms typically use Upper Confidence Bound (UCB) or Thompson Sampling (TS) to guide exploration. However, unlike our approach, they do not incorporate domain knowledge to promote exploration, which can lead to higher regret. Additionally , these algorithms are mostly studied for budget allocation for a single day or month \cite{nuara2022online} which does not account for changing behaviours of the reward function, a characteristic often observed in campaigns running over many months.

Handling non-stationarity in multi-armed bandits is a well-studied problem in the literature \cite{cavenaghi2021non, besbes2014stochastic, re2021exploiting}. Common methods include passive approaches, such as sliding windows with UCB or TS sampling \cite{trovo2020sliding}, or using discounted rewards \cite{garivier2011upper}. Active methods, such as change point detection \cite{liu2018change, cao2019nearly}, offer a more dynamic approach. Passive methods either discard older data points or assign them less weight. However, in long-running campaigns where non-stationarity changes occur infrequently, these approaches are less effective. For our algorithm, we adopt an active approach to better handle reward function shifts.

\section{Problem Formulation}
\label{problem}
We follow the standard formulation of the Automatic Budget Allocation (ABA) problem from the literature \cite{Nuara2018ACA}. Consider an advertising campaign $\mathcal{A} = \{A_1, \dots, A_n\}$ with $N \in \mathbb{N}$, where each $A_j$ represents a sub-campaign in the portfolio. The campaigns run over a finite time horizon of $T \in \mathbb{N}$ days with a budget $\mathcal{B} = \{\bar{b}_1, \ldots, \bar{b}_T\}$, where $\bar{b}t \in \mathbb{R}^+$ denotes the maximum budget that can be spent at time $t \in {1, \ldots, T}$. For each day and sub-campaign $A_j$, the advertiser must allocate a budget $b_{j,t} \in \left[ \underline{b}_{t}, \bar{b}_{t} \right]$, where $\underline{b}_{t} \in \mathbb{R}^+$ represents the minimum budget. After setting the budget, the platform determines the cost $x_{j,t}$, and the advertiser receives feedback in the form of rewards (such as clicks or conversions) from an unknown function $n_{j,t}$. The goal of the advertiser is to determine the optimal budget allocation across all sub-campaigns to maximize the cumulative return on investment. Formally, the problem is formulated as the following constrained optimization problem:

\begin{align}
    \max_{x_{j,t}} &\quad \sum_{j=1}^{N} n_{j,t}(x_{j,t}) \tag{1a} \\
    \text{s.t.} \quad & \sum_{j=1}^{N} b_{j,t} \leq \bar{b}_t \tag{1b} \\
    & \underline{b}_{t} \leq b_{j,t} \leq \bar{b}_{t} \quad \forall\, j \tag{1c} \\
\end{align}

Here, $x_{j,t}$ represents the cost spent on the sub-campaign $A_j$ at time $t$. The cost-to-reward relationship $n_{j,t}$ is dynamic, often changing over time due to market fluctuations. In particular, we focus on settings where the reward function changes abruptly, modeled as a piece-wise constant function of time that shifts a finite number of times. Formally, in the non-stationary setting, a \textit{break-point} $p \in {1, \dots, T}$ is defined as a round where the expected reward with respect to budget set $B$ of at least one sub-campaign undergoes a change, i.e.,

\begin{align}
\mathbb{E}[\sum_{i=0}^{B} n_{j,p-1}(b_{i})] \neq \mathbb{E}[\sum_{i=0}^{B} n_{j,p}(b_{i})] \quad \text{for some sub-campaign} \; j.
\end{align}

Let $\mathcal{P} = {p_1, \dots, p_{T}}$ denote the set of breakpoints, with $p_0 = 1$, partitioning the rounds into a set of \textit{phases} ${\mathcal{F}_1, \dots, \mathcal{F}_T}$, where each phase is defined as:

\begin{align}
\mathcal{F}_\phi = \{ t \in \{1, \dots, T\} \mid p_{\phi-1} \leq t < p_\phi \}.
\end{align}

Within each phase $\mathcal{F}_\phi$, the reward function for sub-campaign $A_j$ remains constant and is given by:

\[
\mu_{j,\phi} = \mathbb{E}[\sum_{i=0}^{B} n_{j, \phi}(b_{i})] \quad \text{for } t \in \mathcal{F}_\phi.
\]

To effectively detect abrupt changes in the reward functions, we follow two standard assumptions commonly used in non-stationary multi-armed bandit (MAB) settings \cite{10.1007/978-3-030-86486-6_4}:

\textbf{Assumption 1} $\exists \, \tau \in \mathbb{R}^+$, \textit{known to the learner, such that for each sub campaign} $A_j$ \textit{whose expected reward changes between consecutive phases} $\phi$ \textit{and} $\phi + 1$, \textit{we have:}

\[
|\mu_{j,\phi} - \mu_{j,\phi+1}| \geq \tau.
\]

This lets the learner decide on a minimum possible magnitude of change such that the learner is able to detect it.

\textbf{Assumption 2} \textit{There exists a time period} $T_p$, \textit{unknown to the learner, such that:}

\[
\min_{\phi \in \{1, \dots, T\}} (p_\phi - p_{\phi-1}) \geq T_p.
\]

This prevents the breakpoints from being too-close to one another.

\textbf{Assumption 3} \label{ass2} Based on previous literature, the reward function at any phase  $n_j(x)$ exhibits the following properties \cite{gigli2024multi, han2020contextual}:
\begin{enumerate}
    \item $n_j(x)$ is continuous and smooth to at least the second order.
    \item $n_j(x)$ is monotonically increasing with the cost (more spend always yields more clicks/conversions), i.e., $n_j'(x) > 0$.
    \item $n_j(x)$ has a diminishing marginal impact, i.e., $n_j''(x) < 0$.
\end{enumerate}

\begin{figure*}[!h]
\centering
\includegraphics[width=\textwidth]{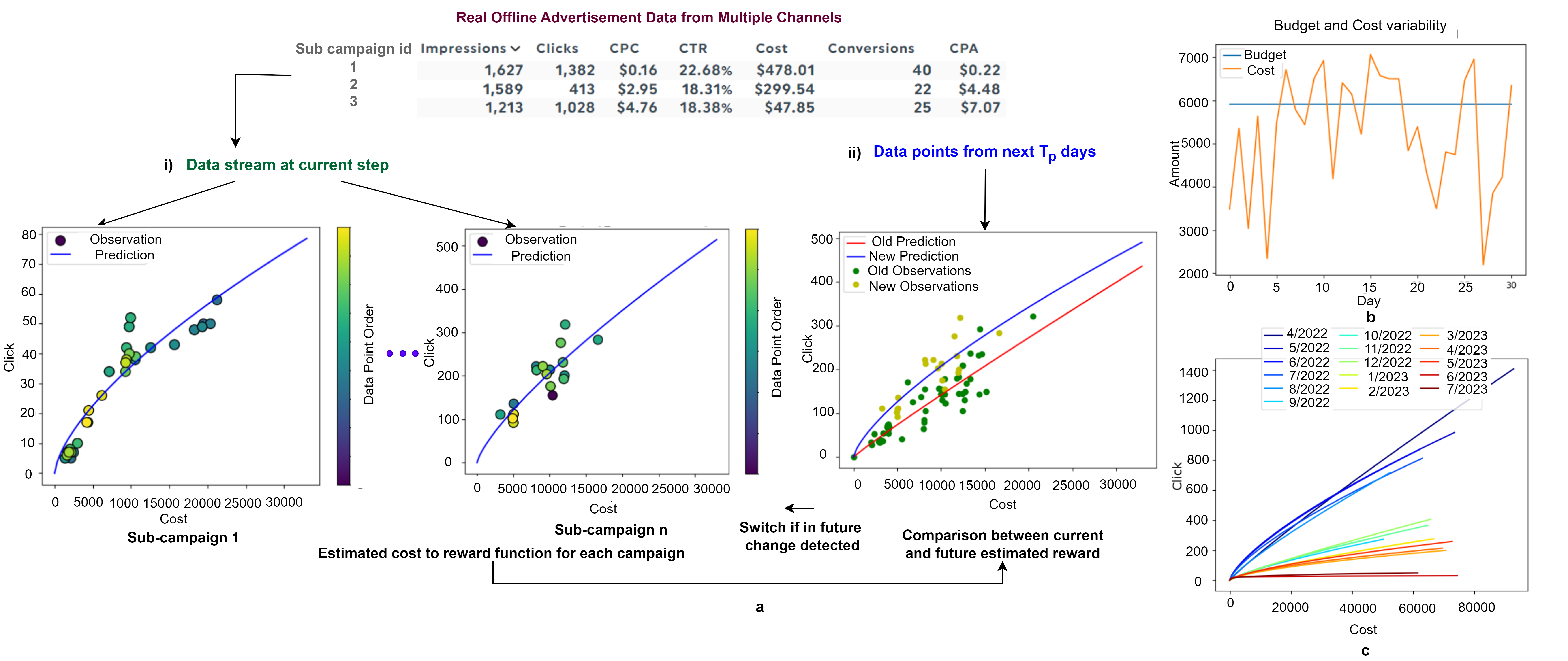}
\caption{a) Architecture of the simulation environment where the reward function learned from the logged data b) Variability of budget to cost consumption in the environment c) Changing reward functions over different months in the environment}
\label{simenv}
\end{figure*}

\section{Preliminaries}
\label{prelim}
In a combinatorial semi-bandit framework \cite{pmlr-v28-chen13a}, the agent selects a subset of options, referred to as super-arms, from a finite set of available choices, known as arms. This selection is subject to combinatorial constraints, such as the knapsack constraint. In this work, the reward of each arm is modeled using Gaussian Process Regression, and the optimization is solved using a multi-choice knapsack algorithm. We briefly explain each of these concepts as follows:

\subsection{Gaussian Process Regression}

Gaussian Process Regression (GPR) \cite{schulz2018tutorial} is employed to model the relationship between budget allocation and resulting reward. GPR is a non-parametric, probabilistic method that provides both predictive mean and uncertainty estimates for a given set of inputs. Formally, a GP is defined as:

\[
f(\mathbf{x}) \sim \mathcal{GP}(\mu(\mathbf{x}), k(\mathbf{x}, \mathbf{x}'))
\]

where \( f(\mathbf{x}) \) represents the unknown function that relates the input variables \( \mathbf{x} \) (e.g., budget) to the output variables (e.g., clicks). The mean function \( \mu(\mathbf{x}) \) is typically assumed to be zero. The covariance or kernel function \( k(\mathbf{x}, \mathbf{x}') \) encodes the correlation between any two input points.

The predictive mean \( \mu(\mathbf{x}_*) \) and variance \( \sigma^2(\mathbf{x}_*) \) at a test point \( \mathbf{x}_* \) are given by:

\[
\mu(\mathbf{x}_*) = \mathbf{k}_*^\top (\mathbf{K} + \sigma_n^2 \mathbf{I})^{-1} \mathbf{y}
\]
\[
\sigma^2(\mathbf{x}_*) = k(\mathbf{x}_*, \mathbf{x}_*) - \mathbf{k}_*^\top (\mathbf{K} + \sigma_n^2 \mathbf{I})^{-1} \mathbf{k}_*
\]

where \( \mathbf{k}_* = [k(\mathbf{x}_*, \mathbf{x}_1), \dots, k(\mathbf{x}_*, \mathbf{x}_n)]^\top \) is the vector of covariances between the test point \( \mathbf{x}_* \) and each training input \( \mathbf{x}_i \), and \( \mathbf{K} \) is the covariance matrix computed over the training inputs, with entries \( K_{ij} = k(\mathbf{x}_i, \mathbf{x}_j) \) and y is the observed mean. The term \( \sigma_n^2 \) represents the variance of the noise in the observations.

The budget-to-reward relationship is modeled using the Radial Basis Function (RBF) kernel. The RBF kernel assumes a smooth and continuous relationship, defined as \( k_{\text{RBF}}(\mathbf{x}, \mathbf{x}') = \sigma_f^2 \exp\left( -\frac{||\mathbf{x} - \mathbf{x}'||^2}{2 l^2} \right) \), where \( \sigma_f^2 \) is the signal variance and \( l \) is the length scale.

\subsection{Multi Choice Knapsack}

The optimization problem can be cast as a modified version of the knapsack problem from \cite{kellerer2004multiple} called Multi Choice Knapsack (MCK). Given an estimated reward model of each sub-campaign and an evenly spaced discritization of the daily budget $B \subset \mathcal{B}$, the optimal reward for each sub-campaign can be identified through enumeration. The solution can be efficiently computed with a dynamic programming approach. The matrix $M(j,b)$ with $j \in {1 \dots N}$ and $b \in B$. For a particular $\mathcal{F}_\phi$, The matrix is iteratively filled: each element is initialized as \( M(j, b) = 0 \) for all \( j \) and \( b \in \mathcal{B} \). For \( j = 1 \), the value is set: 

\[
M(1, b) = n_1(b) \quad \forall b \in B
\]

This equation represents the best budget allocation for the sub-campaign \( A_1 \) if it were the only sub-campaign to consider. For \( j > 1 \), each matrix entry is updated as follows:

\[
M(j, b) = \max_{b' \in B, b' \leq b} \left( M(j - 1, b') + n_j(b - b') \right)
\]

Then the maximum value among all combinations is selected.

At the end of the recursion, the optimal solution is found by evaluating the matrix cell corresponding to:

\[
\max_{b \in B} M(N, b)
\]

To retrieve the corresponding budget allocation, the matrix is traced back to store the partial assignments that maximize the total value. The time complexity of this algorithm is \( O(N H^2) \), where \( N \) is the number of subcampaigns and \( H = |\mathcal{B}| \) represents the cardinality of the budget set.

\section{Simulation Environment}
\label{environment}
A major challenge in studying budget allocation algorithms for digital ads is the lack of open-source simulation environment capable of simulating logged offline data. Previous studies have either relied on synthetic data \cite{Nuara2018ACA, 10.5555/3618408.3618709}, which fails to fully capture real-world dynamics, or on proprietary data that is not publicly available \cite{han2020contextual, 8545777}, rendering research results difficult to reproduce. Available real world datasets like criterio dataset \cite{diemert2017attribution} do not provide structured campaign groups and are limited to a time horizon of 30 days. To bridge this gap, we designed a simulation environment that mimics the behaviour of long running ad campaigns from logged data. The simulation environment and the logged data will be released publicly to facilitate reproducible research. The architecture of the simulation environment is depicted in Fig \ref{simenv}a.

The daily budget is set as per the total monthly cost consumed by all the campaigns of a campaign group divided by the number of days per month. In any realistic ad delivery platform the actual spent cost $x_{j,t}$ is in not equivalent to the allocated budget and depends on the platforms internal learning algorithms. For example the Google Ads platform provides the following guidelines \cite{googleAboutAverage} :
1) The spent amount can be lower or 2 times higher than the daily budget on any particular day.
2) The total spent budget is not more than 30.4 the average daily budget. We model this variability in daily budget spent using a truncated normal distribution:
\begin{align}
x_{j,t} \sim \mathcal{N}(b_{j,t}, \sigma^2) \;\; s.t \;\; 0 \leq x_{j,t} \leq 2*b_{j,t}
\end{align}
The cost variability is shown in Fig \ref{simenv}b. Following \cite{han2020contextual} we model the cost to reward function of each sub campaign as a power law function with noise.

\begin{align}
n_j(x_{j,t}) = {\alpha_c}*x_{j,t}^{\omega_c} + \epsilon
\end{align}

Where $\epsilon$ adds a small error in observation. The simulation environment updates the reward model every day with data points from the logged data of that day. The parameters $\alpha_c$ and $\omega_c$ are estimated from data using curve fitting as shown in Fig \ref{simenv}a i). In order to model abrupt changes between the reward functions we maintain a power law model $\alpha_f$ and $\omega_f$ for the next $T_p$ days from the current time point in simulation (Assuming a stationary period of length $T_p$) of data. If a change is detected, i.e., when $\alpha_c$ and $\alpha_f$ differ more than $20\%$, the current model is replaced with the future model on the onset of detected change as shown in Fig \ref{simenv}a ii). This allows the function to change at arbitrary points during the run of the campaign as would happen in a real campaign as depicted in Fig \ref{simenv}c.

\begin{figure*}[!t]
\centering
\includegraphics[width=0.9\textwidth]{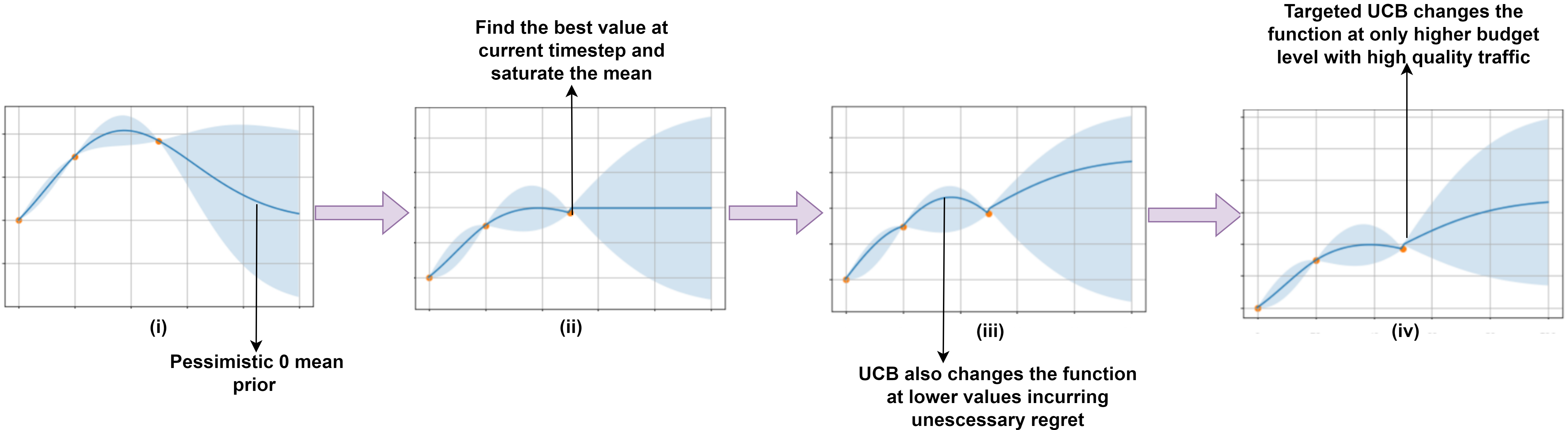}
\caption{A simple representation of the GP estimation with saturated mean and targeted UCB exploration}
\label{gpexample}
\end{figure*}

\section{Automatic Budget Allocation Algorithm}
\label{ABA}
The ABA algorithm is summarised in Algo \ref{alg:TUCB-MAE} which involves the following broad steps: 
\begin{enumerate}
\item Estimation of reward function using GP 
\item Predicting rewards for each arm of the bandit 
\item Allocating budget using multi-choice knapsack 
\item Change point detection. 
\end{enumerate}
The algorithm enhances the automatic budget allocation strategy to cater to practical considerations. In any multichannel advertising application exploration is expensive. This means we should be selective about spending budget in regions where we expect higher gains. First we observe that a zero-mean Gaussian Process Regressor as used in \cite{nuara2022online} obtains a pessimistic prior over the budget range as depicted in Fig \ref{gpexample} (i). This prior restricts effective exploration to higher ranges of budget where quality traffic might be present.  To address this, we modify the mean of the GP model with a saturating mean function for each sub-campaign $j$ as follows:

\begin{align}
\label{satmean}
\hat{n}_j =
\begin{cases}
\hat{n}_{j \text{max}}, & \text{if } b_{j,i} > b_{j \text{max}} \\
\hat{n}_j, & \text{otherwise}
\end{cases}
\end{align}

Where $\hat{n}_j$ is the GP estimate of $n_j$ with time subscript removed for brevity and $b_{j \text{max}}$ is the current budget level with highest reward value for campaign j and $i \in B$. This allows the mean to saturate at the last estimated maximum observed reward for a campaign as shown in Fig \ref{gpexample} (ii). Next, we introduce a modified Upper Confidence Bound exploration strategy to enhance the performance of the combinatorial bandit approach. The modified exploration strategy is defined as follows:

\begin{align}
\label{moducb}
\tilde{n}_j(\cdot) \gets \hat{n}_j(\cdot) + \{\beta * (1 - \theta_j) * \sigma_j\}|\mathbb{I}_{b_{j,i} > b_{j, max}}
\end{align}

Where $\beta$ is the exploration factor for balancing exploration and exploitation. The proposed modified UCB promotes the following:
\begin{itemize}
\item $\theta_j$ represents the efficiency of arm j. For example, Cost per Click (CPC) can be used as $\theta_j$ when maximizing clicks where $\theta_j = cpc_j$. $cpc_j = (\sum_t \frac{cost_{j,t}}{click_{j,t}})/max(\sum_t \frac{cost_{j,t}}{click_{j,t}})$ is the normalized cost per click of sub campaign $j$. A lower cpc denotes higher efficiency of the sub-campaign. The inclusion of term $1 - cpc_j$ incentivizes the policy to perform aggressive explorations for efficient arms. This term can be replaced by any other metric of efficiency as per advertiser's objective. For example, the Cost per Acquisition (CPA) can be chosen as the exploration incentive during maximizing conversions. 
\item The term $\mathbb{I}_{b_{j,i} > b_{j, max}}$ denotes and indicator function that checks whether a discritized budget level used by MCK is higher than the current observed budget level having highest predicted reward. The uncertainty based exploration is only targeted towards regions that contain more information than the current best knowledge as illustrated in Fig \ref{gpexample} (iv). Without this targeted exploration the algorithm may incur unnecessary regret by exploring lower budget levels as shown in Fig \ref{gpexample} (iii).
\end{itemize}

\begin{algorithm}
\caption{TUCB-MAE}\label{alg:TUCB-MAE}
\begin{algorithmic}[1]
\Require Set $B$ of discretized budget values, Initial Old Model $\{\mathcal{M}_j^{(0)}\}_{j=1}^N$, Current Model $\{\tilde{\mathcal{M}}_j^{(0)}\}_{j=1}^N$, Daily Budget limit $\bar{b}_t$, time horizon $T$, Memory $buffer$
\For{$t \in \{1, \dots, T\}$}
    \For{$j \in \{1, \dots, N\}$}
        \If{$t = 1$}
            \State $\mathcal{M}_j \gets \mathcal{M}_j^{(0)}$
            \State $\tilde{\mathcal{M}_j} \gets \tilde{\mathcal{M}}_j^{(0)}$
        \Else
            \State $ y_{j,t} = {n}_{j}(x_{j,t})$
            \State $buffer.append(y_{j,t}, x_{j,t})$
            \State $\mathcal{M}_j \gets \text{Update}\left(\mathcal{M}_j, buffer)\right)$
            \State $\tilde{\mathcal{M}_j} \gets \text{Update}\left(\tilde{\mathcal{M}_j}, buffer[:window_{length}])\right)$
        \EndIf
        \State Check Eq \ref{cpd} > $\tau$ to detect breakpoint
        \If{breakpoint}
        \State $buffer \gets buffer[:window_{length}]$
        \EndIf
        \State Use Eq \ref{satmean} to saturate mean
        \State Use Eq \ref{moducb} to select the next exploration points
    \EndFor
    \State $\{(\hat{x}_{j,t})\}_{j=1}^N \gets \text{Optimize}\left(\{(\tilde{n}_{j}(\cdot), B)\}_{j=1}^N, \bar{b}_t\right)$
    \State Pull $(\hat{x}_{1,t}, \dots, \hat{x}_{N,t})$
\EndFor
\end{algorithmic}
\end{algorithm}

For the non stationary change detection we maintain two models. $\mathcal{M}_j$ denotes the model which estimates the reward function for data points of phase $\mathcal{F}_\phi$ until break-point $p_{\phi+1}$ is detected. $\tilde{\mathcal{M}}_j$ denotes the model using data points from current $window_{length}$. We then perform change point detection using a Mean Average Error test over the entire budget set to check if the predictions from the models have changed beyond a threshold $\tau$.
\begin{align}
\label{cpd}
pred_{diff} = \frac{1}{B} \sum_{i=1}^{B} \mathcal{M}_j (b_i) - \tilde{\mathcal{M}}(b_i) 
\end{align}
MAE is used due to its ease of implementation for practical usage. Any sophisticated change point detection strategy can be used in place of MAE. If a change is detected the data buffer is refreshed with the current $window_{length}$ data denoting the start of a new phase $\mathcal{F}_{\phi+1}$. 

We now theoretically analyze the regret bound of the proposed method and show that the regret bound reduces for the proposed UCB utility under Assumption \ref{ass2}.

\begin{lemma}[From \cite{10.5555/3104322.3104451}]
\label{lemma1}
Given the realization of a GP $f(\cdot)$, the estimates of the mean $\hat{\mu}_{t-1}(b)$ and variance $\hat{\sigma}^2_{t-1}(b)$ for the input $b$ belonging to the input space $B$, for each $\beta \in \mathbb{R}^+$ the following condition holds:
\[
\mathbb{P} \left( \left| f(b) - \hat{\mu}_{t-1}(b) \right| \geq \sqrt{\beta} \, \hat{\sigma}_{t-1}(b) \right) \leq e^{-\frac{\beta}{2}},
\]
for each $b \in B$.
\end{lemma}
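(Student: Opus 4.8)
The plan is to reduce the statement to a one-dimensional Gaussian tail bound. The starting observation is that, by the defining property of a Gaussian process, the posterior law of $f(b)$ conditioned on the training observations is itself Gaussian, with mean $\hat{\mu}_{t-1}(b)$ and variance $\hat{\sigma}^2_{t-1}(b)$ given exactly by the GPR predictive formulas recalled in the Preliminaries. Hence, for each fixed $b \in B$, the centered and scaled quantity
\[
Z \;=\; \frac{f(b) - \hat{\mu}_{t-1}(b)}{\hat{\sigma}_{t-1}(b)}
\]
is, conditionally on the data, a standard normal random variable $\mathcal{N}(0,1)$. The event in the lemma is precisely $\{\,|Z| \geq \sqrt{\beta}\,\}$, so the whole task reduces to controlling $\mathbb{P}(|Z| \geq \sqrt{\beta})$.

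First I would establish the sharp one-sided tail inequality $\mathbb{P}(Z \geq r) \leq \tfrac{1}{2}e^{-r^2/2}$ for all $r \geq 0$. The crude Chernoff bound, obtained from the moment generating function $\mathbb{E}[e^{sZ}] = e^{s^2/2}$ and optimizing over $s > 0$, already yields $\mathbb{P}(Z \geq r) \leq e^{-r^2/2}$; the factor-$\tfrac{1}{2}$ refinement follows from a standard sharpening of the Gaussian tail estimate, e.g.\ by verifying that $g(r) = \tfrac{1}{2}e^{-r^2/2} - \Phi(-r)$ vanishes at $r = 0$ and as $r \to \infty$ and is unimodal on $[0,\infty)$, hence nonnegative throughout. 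By symmetry of the standard normal density this gives $\mathbb{P}(|Z| \geq r) = 2\,\mathbb{P}(Z \geq r) \leq e^{-r^2/2}$.

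Finally I would substitute $r = \sqrt{\beta}$ to obtain $\mathbb{P}(|Z| \geq \sqrt{\beta}) \leq e^{-\beta/2}$, which is exactly the claimed bound once the definition of $Z$ is unwound. Because the argument is carried out for each fixed $b \in B$ under the conditional posterior measure, the stated inequality holds pointwise in $b$, as required, and in particular uniformly over the discretized budget set used by the allocation procedure.

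The only genuinely delicate point is the factor-$\tfrac{1}{2}$ in the one-sided tail estimate: the naive Chernoff bound loses this constant and would deliver only the weaker $\mathbb{P}(|Z| \geq \sqrt{\beta}) \leq 2e^{-\beta/2}$, so recovering the clean exponent $e^{-\beta/2}$ hinges on the sharper tail comparison above. Everything else is a direct consequence of Gaussian conditioning and is entirely routine.
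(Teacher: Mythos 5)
Your proposal is correct and follows essentially the same route as the paper: standardize $f(b)$ under its Gaussian posterior, prove the sharp one-sided tail bound $\mathbb{P}(Z \geq c) \leq \tfrac{1}{2}e^{-c^2/2}$, and double by symmetry. The only divergence is in how that factor of $\tfrac{1}{2}$ is recovered --- the paper completes the square inside the tail integral and uses $e^{-c(r-c)} \leq 1$ for $r \geq c$ to reduce to $\mathbb{P}(r>0)=\tfrac{1}{2}$, whereas you argue via the sign of the derivative of $g(r)=\tfrac{1}{2}e^{-r^2/2}-\Phi(-r)$; both are valid and yield the identical constant.
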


\begin{table*}[!h]
\small
\centering
\caption{Comparison of proposed algorithm with SOTA baselines using logged campaigns for real products running on different ad delivery platforms reported for random seeds 1, 42, and 76. The reported values have been divided by 1000. Each row in the table represents the cumulative Clicks$\uparrow$, Regret$\downarrow$ and CPC ({¥}) $\downarrow$ of each method.}
\label{clickresult}
\begin{tblr}{
  width = \linewidth,
  colspec = {Q[90]Q[75]Q[75]Q[75]Q[100]Q[100]Q[100]Q[100]Q[100]Q[100]},
  cell{1}{1} = {c},
  cell{1}{2} = {c},
  cell{1}{4} = {c},
  cell{1}{5} = {c},
  cell{1}{6} = {c},
  cell{1}{7} = {c},
  cell{1}{8} = {c},
  cell{1}{10} = {c},
  hline{1-2,7} = {-}{},
  hline{3-6} = {1-10}{},
}
\textbf{Product Type} & \textbf{Sub campaign Groups} & \textbf{Duration} & \textbf{Metric} & \textbf{TUCB-MAE (Ours)} & \textbf{UCB-MAE} & \textbf{UCB-NCPD} & \textbf{UCB-SW} & \textbf{TS-SW} & \textbf{UCB-DS}\\
{Attendance System\\Platform A} & {Search-1\\Search-2\\Display} & {01-07-22 -~\\30-07-23} & {Clicks $\uparrow$ \\ Regret $\downarrow$ \\ CPC $\downarrow$} & {\textbf{55.62  $\pm$  1.35}\\\textbf{14.97  $\pm$ 1.39}\\\textbf{52.94   $\pm$ 0.83}} & {44.52  $\pm$ 1.31\\25.39  $\pm$ ~1.40\\63.80  $\pm$ ~2.54} & {53.10  $\pm$ ~0.60\\17.59  $\pm$ ~0.74~\\53.43  $\pm$ ~0.36} & {47.27  $\pm$ ~2.06\\22.65  $\pm$ ~2.05\\59.64  $\pm$ ~1.64} & {49.34   $\pm$ 2.63\\20.60  $\pm$ ~2.57\\58.47  $\pm$ ~3.01} & {29.82  $\pm$ ~2.88\\40.02  $\pm$ ~2.68\\85.72  $\pm$ ~1.29}\\
{Predictive Analysis Tool\\Platform A} & {Search\\Display\\Discovery} & {01-04-22 -\\10-09-23} & {Clicks $\uparrow$ \\ Regret $\downarrow$ \\ CPC $\downarrow$} & {\textbf{243.11$\pm$6.78}\\\textbf{66.71   $\pm$ 5.36}\\\textbf{32.31   $\pm$ 1.87}} & {218.84   $\pm$ 7.36\\78.45   $\pm$ 6.74\\47.62   $\pm$ 1.47} & {220.98   $\pm$ 6.77\\76.86   $\pm$ 6.72\\ 46.62   $\pm$ 1.46} & {217.44   $\pm$ 7.24\\80.95   $\pm$ 7.43\\46.36   $\pm$ 1.40} & {138.75   $\pm$ 0.91\\156.90   $\pm$ 0.83\\119.27   $\pm$ 1.49} & {187.71   $\pm$ 7.74\\1099.52$\pm$7.21\\67.81   $\pm$ 1.27}\\
{Internet Service Provider\\Platform A} & {Search\\Display\\Discovery} & {01-04-22 -~\\19-10-22} & {Clicks $\uparrow$ \\ Regret $\downarrow$ \\ CPC $\downarrow$} & {\textbf{4.90   $\pm$ 0.09}\\\textbf{226.49$\pm$0.08}\\\textbf{35.93  $\pm$ 0.20}} & {4.52   $\pm$ 0.03\\227.43  $\pm$  0.04\\37.30  $\pm$  0.22} & {4.72   $\pm$ 0.04\\226.63   $\pm$ 0.03\\36.41   $\pm$ 0.32} & {4.75   $\pm$ 0.09\\226.61   $\pm$ 0.07\\36.18   $\pm$ 0.54} & {4.65   $\pm$ 0.21\\226.70   $\pm$ 0.20\\37.23   $\pm$ 1.09} & {3.92  $\pm$  0.06\\227.41  $\pm$  0.06\\42.00  $\pm$  0.32}\\
{Product 17276\\Platform B} & 5 Display & {01-10-23\\-01-07-24} & {Clicks $\uparrow$ \\ Regret $\downarrow$ \\ CPC $\downarrow$} & {\textbf{227.79$\pm$3.18}\\\textbf{35.38   $\pm$ 3.13}\\\textbf{9.47   $\pm$ 0.07}} & {214.42   $\pm$ 4.46\\48.45   $\pm$ 4.43\\10.00   $\pm$ 0.08} & {215.33   $\pm$ 2.51\\47.51   $\pm$ 2.56\\9.92   $\pm$ 0.03} & {214.36   $\pm$ 1.72\\48.53   $\pm$ 1.70\\9.91   $\pm$ 0.06} & {223.01   $\pm$ 3.38\\39.99   $\pm$ 3.20\\9.63   $\pm$ ~0.03} & {201.57  $\pm$  2.38\\61.44  $\pm$  2.43\\10.26  $\pm$  0.06}\\
{Product 15981\\Platform B} & 4 Display & {01-10-23\\- 01-07-24} & {Clicks $\uparrow$ \\ Regret $\downarrow$ \\ CPC $\downarrow$} & {\textbf{105.92$\pm$3.06}\\\textbf{20.35   $\pm$ 2.87}\\\textbf{21.93   $\pm$ 0.20}} & {93.97   $\pm$ 9.59\\31.64   $\pm$ 9.31\\26.81   $\pm$ 2.79} & {82.22   $\pm$ 10.64\\43.15  $\pm$  10.57\\29.20   $\pm$  2.57} & {92.79   $\pm$ 10.41\\32.91   $\pm$ 9.93\\27.07   $\pm$ 2.51} & {92.45   $\pm$ 7.80\\32.96   $\pm$ 7.74\\27.33   $\pm$ 2.91} & {73.23   $\pm$ 10.49\\52.09   $\pm$ 10.45\\31.91   $\pm$ 2.83}\\
\end{tblr}
\end{table*}

\begin{proposition}
Let us consider an ABA problem over T rounds where the function $\hat{n}_j(b)$ is the realization of a GP, using TUCB-MAE algorithm with the following upper bound on the reward function $\hat{n}_j(b)$:
\[
u^{(n)}_{j,t-1}(b) := \hat{\mu}_{j,t-1}(b) + \sqrt{\beta_{j,t}} \hat{\sigma}_{j,t-1}(b)
\]
where $b$ is a budget level,n denotes the round and j is the campaign, with probability at least $1 - \delta$, it holds:
\[
\mathcal{R}_T(U) = \tilde{\mathcal{O}} \left( \sqrt{TN \sum_{j=1}^{N} \gamma_{T}(\hat{n}_j)} \right),
\]
where the notation $\tilde{\mathcal{O}}\left( \cdot \right)$ disregards the logarithmic factors.
\\
\\
\textit{Proof Sketch: } 
It can be derived regret is lower bounded by $\hat{\sigma}_{j,t-1}(a)$ where $a$ is the action with max $\hat{\sigma}_{j,t-1}$ for campaign $j$. Using Lemma 5.6 of \cite{10.5555/3104322.3104451},  the information gain provided by the observations \( n_{t-1} = (\tilde{n}_{j,1}, \dots, \tilde{n}_{j,t-1}) \) corresponding to the actions \( (a_{j,1}, \dots, a_{j,t-1}) \) is:
\[
IG(\hat{n}_{t-1} | \hat{n}_{j}) = \frac{1}{2} \sum_{h=1}^{t-1} \log \left( 1 + \frac{\hat{\sigma}^2_{j,h}(a_j,h)}{\lambda} \right).
\]
and $\hat{\sigma}_{j,t-1}(a)$ can be bounded by:
\[
\sigma^2_{j,h}(a_j,h) \leq \frac{\log \left( 1 + \frac{\hat{\sigma}^2_{j,h}(a_j,h)}{\lambda} \right)}{\log \left( 1 + \frac{1}{\lambda} \right)}
\]
and regret can be derived as a lower bound of IG, 
\\
with $\beta_{j,t} = 2 \log \left( \frac{\pi^2 NMt^2}{3 \delta} \right)k_j$, $k_j = (1-\theta_j)$. For every $\delta \in (0,1)$ the following holds with probability at least $1 - \delta$ (using Lemma \ref{lemma1}),

\[
\mathcal{R}_T(U) \leq 4T\beta_{T} \left\{ \frac{1}{\log \left( 1 + \frac{1}{\lambda} \right)} \sum_{j=1}^{N} \gamma_T(\hat{n}_j) \right\}
\]
where $\lambda$ is the variance of the measurement noise of the reward function $n_j(\cdot)$ and \( \gamma_T(\hat{n}_j) \) is the total information gain . 

Since the regret is bounded by information gain, if we explore values of \( b_{j,t} \leq b_{j\max,t} \), by monotonicity, we have:
\[
\hat{n}_j(b_{j,t}^*) \geq \hat{n}_j(b_{j\max,t}) \geq \hat{n}_j(b_{j,t}).
\]
Where $b_{j,t}^*$ is the budget level with maximum reward of arm j. This means that exploring in this region incurs unnecessary regret because we are not gaining new information about potentially better actions. By restricting exploration to values \( b_{j,t} > b_{j\max,t} \), the effective space of arms to explore is reduced. This reduces \( \gamma_T(\hat{n}_j) \), which in turn reduces the regret bound. Specifically, if we denote the restricted exploration space by \( X_j^+ \), we have:
\[
\gamma_T(\hat{n}_j, X_j^+) \leq \gamma_T(\hat{n}_j).
\]
Thus, under monotonocity assumption of $\hat{n}_j$
\[
R_T(U^+) = O \left( \sqrt{TN \sum_{j=1}^{N} \gamma_T(\hat{n}_j, X_j^+)} \right) \leq \mathcal{R}_T(U)
\]

detailed proof is given in supplementary material.

\end{proposition}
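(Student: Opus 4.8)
The plan is to follow the classical GP-UCB regret analysis of \cite{10.5555/3104322.3104451}, adapted to the combinatorial semi-bandit setting of \cite{pmlr-v28-chen13a} with $N$ sub-campaigns and the knapsack allocation. First I would build a uniform high-probability confidence band. Applying Lemma~\ref{lemma1} at a single $(j,t,b)$ gives tail probability $e^{-\beta/2}$; to make the event
\[
\left| \hat{n}_j(b) - \hat{\mu}_{j,t-1}(b) \right| \leq \sqrt{\beta_{j,t}}\,\hat{\sigma}_{j,t-1}(b)
\]
hold simultaneously over all $N$ sub-campaigns, all $M = |B|$ discretized budget levels, and all $t \leq T$, I would take a union bound and set $\beta_{j,t} = 2\log\!\left(\pi^2 N M t^2/(3\delta)\right) k_j$ with $k_j = (1-\theta_j)$. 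The factor $\pi^2/6 = \sum_t t^{-2}$ absorbs the union over rounds, exactly as in the stated sketch, so the band holds with probability at least $1-\delta$.

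Next I would bound the per-round regret and convert it to information gain. Because the multi-choice knapsack in Algorithm~\ref{alg:TUCB-MAE} maximizes the summed upper-confidence index $u^{(n)}_{j,t-1}$ over feasible allocations, the chosen super-arm has total index no smaller than that of the optimal feasible allocation; combining this with the two-sided band yields the usual telescoping inequality, so the instantaneous super-arm regret is at most $\sum_j 2\sqrt{\beta_{j,t}}\,\hat{\sigma}_{j,t-1}(\hat{x}_{j,t})$. Summing over $t$ and applying Cauchy--Schwarz gives
\[
\mathcal{R}_T(U) \leq \sqrt{\, T \sum_{t=1}^{T} \Big(\sum_{j=1}^{N} 2\sqrt{\beta_{j,t}}\,\hat{\sigma}_{j,t-1}(\hat{x}_{j,t})\Big)^{2}}.
\]
I would then use the elementary bound $\hat{\sigma}^2_{j,h} \leq \log(1+\hat{\sigma}^2_{j,h}/\lambda)/\log(1+1/\lambda)$ together with $IG(\hat{n}_{t-1}\mid\hat{n}_j) = \tfrac{1}{2}\sum_h \log(1+\hat{\sigma}^2_{j,h}/\lambda) \leq \gamma_T(\hat{n}_j)$ (Lemma 5.6 of \cite{10.5555/3104322.3104451}) to obtain $\sum_t \hat{\sigma}^2_{j,t-1} = O(\gamma_T(\hat{n}_j))$. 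Substituting and collecting the $N$ campaigns yields the advertised rate $\mathcal{R}_T(U) = \tilde{\mathcal{O}}(\sqrt{TN\sum_j \gamma_T(\hat{n}_j)})$, where $\tilde{\mathcal{O}}$ hides the logarithmic factors carried in $\beta_T$.

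For the targeted-exploration improvement I would invoke the monotonicity and concavity of Assumption~\ref{ass2}. Since the indicator $\mathbb{I}_{b_{j,i} > b_{j\max}}$ in Eq.~\ref{moducb} confines the uncertainty bonus to budget levels above the current best, the effective exploration domain shrinks to $X_j^+ = \{b : b > b_{j\max,t}\}$. Monotonicity forces $\hat{n}_j(b) \leq \hat{n}_j(b_{j\max,t}) \leq \hat{n}_j(b_{j,t}^*)$ for every $b \leq b_{j\max,t}$, so no improving action can lie below the current best and sampling there accrues regret without information. Restricting the GP to the smaller input set cannot increase its maximal information gain, hence $\gamma_T(\hat{n}_j, X_j^+) \leq \gamma_T(\hat{n}_j)$, and rerunning the same chain of inequalities delivers $R_T(U^+) \leq \mathcal{R}_T(U)$.

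I expect the main obstacle to be the combinatorial decomposition step. Unlike vanilla GP-UCB with a single arm, the regret here is over a super-arm chosen by the knapsack solver subject to the budget constraint, so I must justify that super-arm optimality on the UCB surrogate translates into an \emph{additive} per-campaign regret bound; this needs the bounded-smoothness condition of \cite{pmlr-v28-chen13a} and careful tracking of the feasibility constraint through the telescoping argument. A related subtlety is the saturated mean of Eq.~\ref{satmean}, which clips the GP posterior above $\hat{n}_{j\max}$: I would need to verify that this clipping does not invalidate the confidence band of Lemma~\ref{lemma1}, arguing that truncating the upper region only tightens the exploitation estimate and therefore cannot enlarge the accumulated regret.
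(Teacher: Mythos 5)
Your proposal follows essentially the same route as the paper's appendix proof: a union bound over campaigns, budget levels, and rounds with the same choice of $\beta_{j,t}$, optimism of the knapsack-selected super-arm on the UCB surrogate to get the per-round bound $2\sum_j \sqrt{\beta_{j,t}}\,\hat{\sigma}_{j,t-1}$, Cauchy--Schwarz plus the $\log(1+s^2/\lambda)$ inequality to convert summed variances into information gain, and the monotonicity argument giving $\gamma_T(\hat{n}_j, X_j^+) \leq \gamma_T(\hat{n}_j)$. The additional subtleties you flag (additive decomposition through the knapsack oracle and the effect of the saturated mean on the confidence band) are reasonable caveats but do not change the argument, since the super-arm reward is a plain sum over sub-campaigns and the paper likewise relies only on monotonicity of $r_\mu(S)$ in the arm means.
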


\begin{figure*}[!t]
\centering
\includegraphics[width=0.9\textwidth]{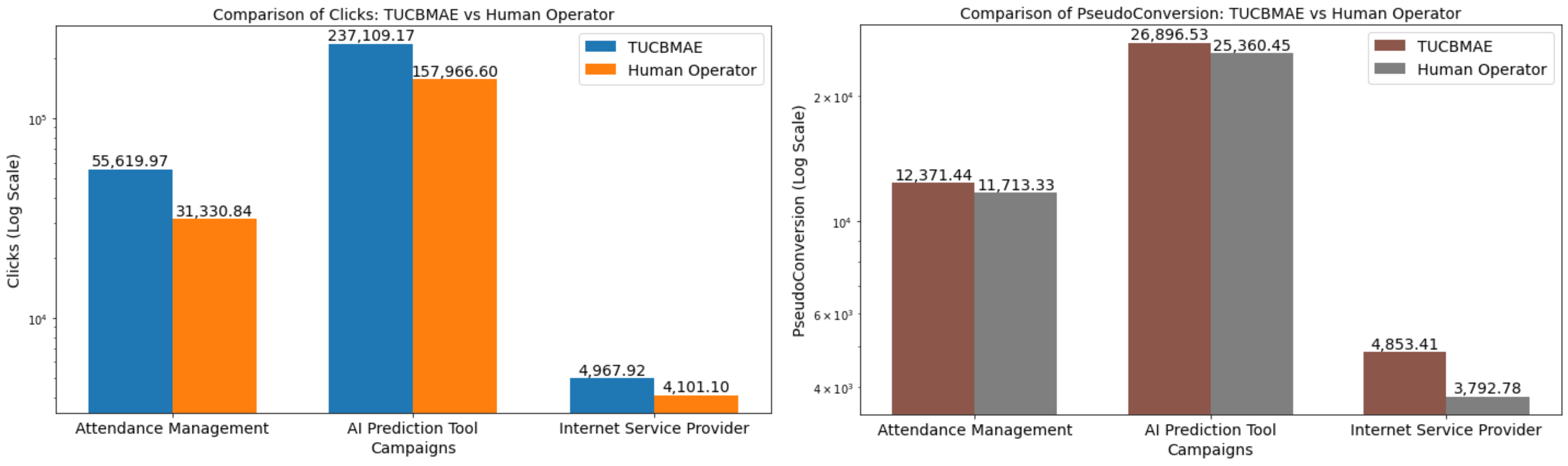}
\caption{Comparison with respect to the human operator's budget allocation from the logged dataset}
\label{clickps}
\end{figure*}


\section{Empirical Studies}
\label{empirical}
We perform empirical experiments on multiple real logged campaign data obtained from different platforms. We denote the different advertisement platforms as Platform A and Platform B. The hyper-parameter choices are reported in supplementary material. For experimental analysis we choose $T_p = 20$ assuming a stationary period of 20 days and $window_{length} = 7$ days. The budget discretization granularity is 500. We simulate these campaigns in the simulation environment allowing the experiments to be conducted for long running campaigns with changing behaviour due to market dynamics. The noise  ($\epsilon$) is sampled from a normal distribution $\mathcal{N}(0,0.1)$. The proposed algorithm is compared against the following SOTA baselines:

\begin{enumerate}
    \item UCB - MAE: Represents a combinatorial multi-arm bandit strategy with upper confidence bound for exploration and mean average error for change point detection. Represents the class of active approaches where the reward function is re-learned based on change point detection \cite{10.1007/978-3-030-86486-6_4}. Comparison shows superiority of our proposed exploration utility.
    \item UCB - NCPD: Is a combinatorial bandit strategy with UCB exploration and no change point detection depicting the importance of change point detection.
    \item UCB - SW \cite{garivier2011upper} : Represents a combinatorial bandit algorithm with UCB exploration and sliding window of fixed length (10 days) for non stationary adaption and same exploration parameter $\beta$ as our algorithm.
    \item TS-SW \cite{fiandri2024sliding}: Represents a combinatorial bandit algorithm with thompson sampling exploration and sliding window of fixed length (10 days) for non stationary adaption.
    \item UCB-DS \cite{garivier2011upper} : A combinatorial bandit strategy with discounting past data using a factor 0.9 and UCB exploration strategy.
\end{enumerate}

We report the results in Table \ref{clickresult} with respect to three metrics explained as follows:
\\
\textbf{Clicks:}  A higher number of clicks generally reflects increased user engagement, making it a key measure of effective budget allocation.
\\
\textbf{Regret:} We report the average cummulative regret compared to an oracle optimizer which has access to the parameters of the true reward function in the simulation environment.
\\
\textbf{Cost Per Click (CPC):} The average cost per click for all the sub-campaigns in a campaign group. A lower CPC denotes higher ROAS and efficiency for advertisers.

The algorithms are tested across different types of products with varied user base as reported in Table \ref{clickresult}. Each product contains of multiple sub-campaigns running together for more than 5 months. The sub-campaigns are distributed across multiple channels. Display advertisements visually engaging ads placed at different web-channels that a user visits. Search campaigns allows advertisements to be placed across a search engine's network of search results. Search-1 campaigns target users searching with specific product related keywords whereas Search-2 campaigns target a wider audience with generic keywords related to the domain of the product. The results in Table \ref{clickresult} exhibits the effectiveness of the proposed algorithm with higher clicks, lower regret and lower cpc for all products. We also note that discounted reward based adaptation strategy renders the lowest performance as providing lower weights to past observation refrains GP from adapting to the true function. In Fig \ref{rewardAakashi} we plot the reward over entire duration of campaign for  attendance management system. The plots show TUCBMAE algorithm achieves higher rewards than sliding window during stationary periods and can adapt to non stationary change faster than UCB algorithm. However, the algorithm unable to adapt to very short period of no-stationary changes as observed around day 250. 

\begin{figure}[!h]
\centering
\includegraphics[width=0.48\textwidth]{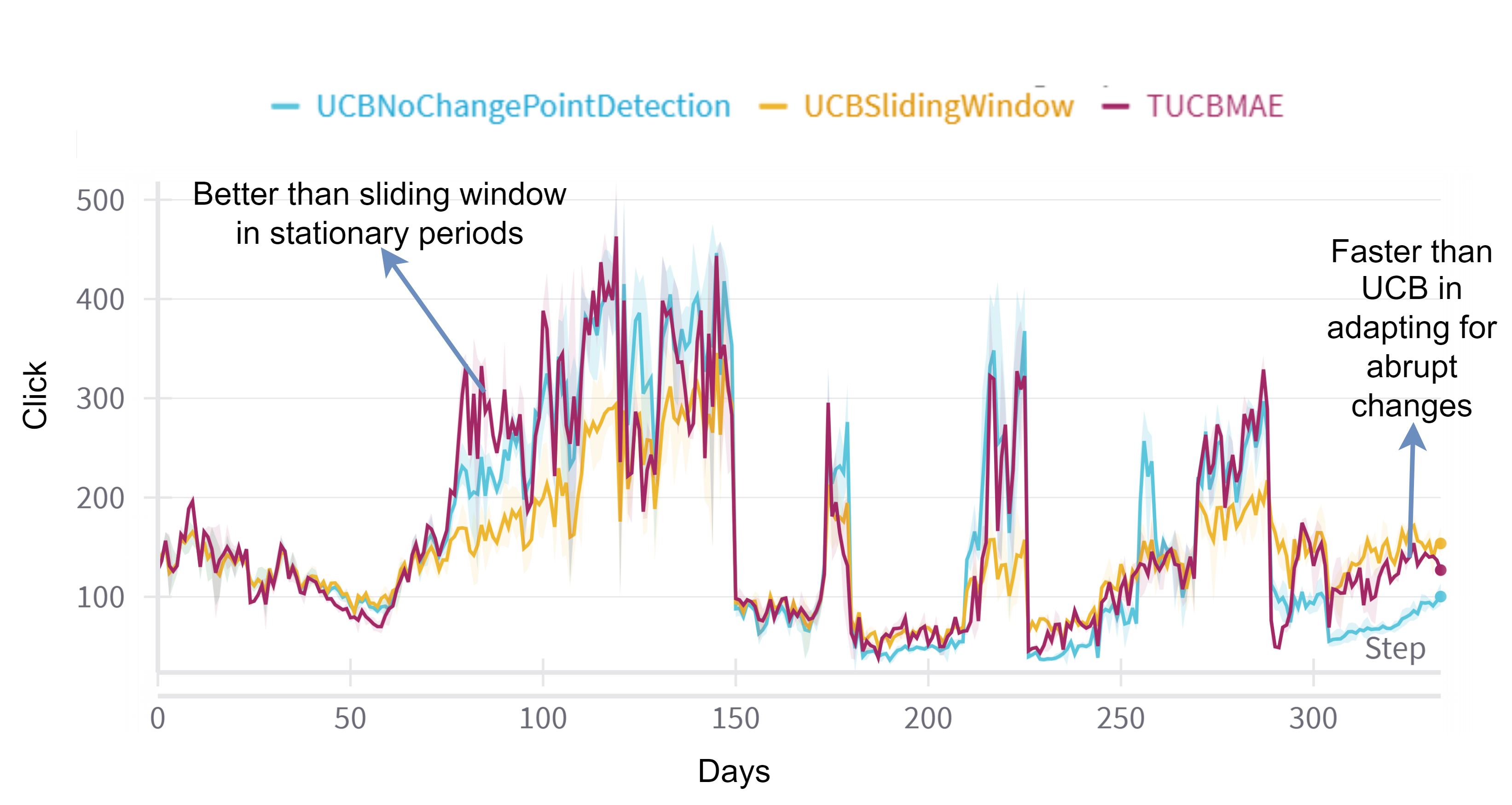}
\caption{Reward comparison for around 300 days for attendance management campaign.}
\label{rewardAakashi}
\end{figure}

\subsection{REWARD TYPES}

We consider two kinds of reward signals for budget allocation. The first choice is maximizing clicks which has been popularly used in pay per click advertisements \cite{gigli2024multi, nuara2022online}.  However, in businesses advertisers often aim at maximizing the number of conversions for campaigns which drives profitability. We observe the number of conversion per day is a very sparse signal often having a low value for many days which renders this signal inefficient to be estimated as a reward function and optimized directly. In order to optimize conversions we formulate $pseudo_{conversion}$ defined as follows:

\[ pseudo_{conversion} = \sum_{t=t-7}^{t} \text{click}_t \left( \frac{\sum_{t'=t-7}^{t} \text{conversion}_{t'}}{\sum_{t'=t-7}^{t} \text{click}_{t'}} \right)
\]

$pseudo_{conversion}$ calculates a weighted conversion rate based on the number of clicks for each day, scaled by the conversion rate over the past 7 days as depicted in Fig \ref{pseudoconversion}, capturing how effective ad campaigns are at driving conversion.

\begin{figure}[!h]
\centering
\includegraphics[width=0.48\textwidth]{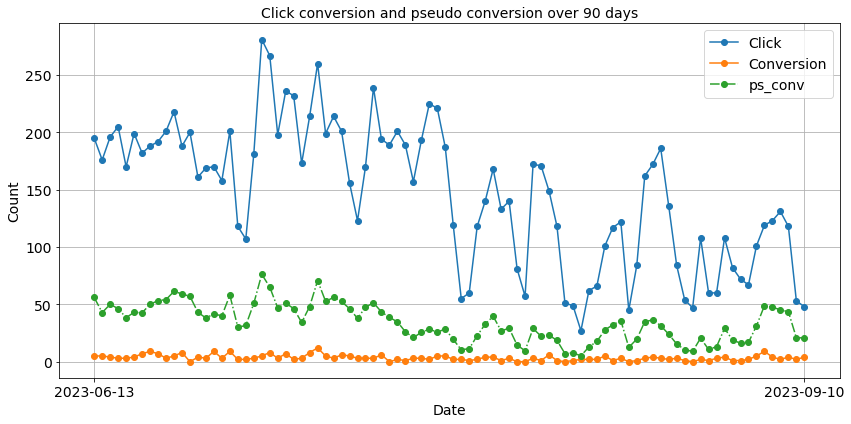}
\caption{Click, conversions and pseudo conversion of one sub-campaign from AI Prediction Tool Campaign}
\label{pseudoconversion}
\end{figure}

We perform comparison with the logged budget allocation of the human operator from the dataset for both clicks and pseudo conversions. The results are reported in Fig \ref{clickps}. TUCBMAE shows a performance improvement of at least 19\% compared to the human operator in terms of click and 5.8\% for pseudo conversions.

\subsection{Ablation Studies}

We perform ablation studies by studying the effect of different components of the proposed combinatorial bandit approach on AI Prediction Tool Campaigns with respect to clicks. TUCBMAENoSM represents a policy using  Targeted UCB with CPC as efficiency but no saturating mean. TUCBMAENoCPC is a policy without efficiency incentive for exploration. NoTUCBMAEWithCPC is a policy without targeted UCB for higher budget range but with CPC incentive for exploration along with normal UCB and saturating mean. The results are reported in Fig \ref{ablationstudy}. It can be clearly interpreted from the ablation study the targeted UCB has the highest contribution to performance gain as NoTUCBMAEWithCPC has the lowest reward. Additionally we observe the efficiency incentive provides performance boost. Finally, the ablation study shows all three components contribute to the performance improvement of the algorithm.

\begin{figure}[!h]
\centering
\includegraphics[width=0.48\textwidth]{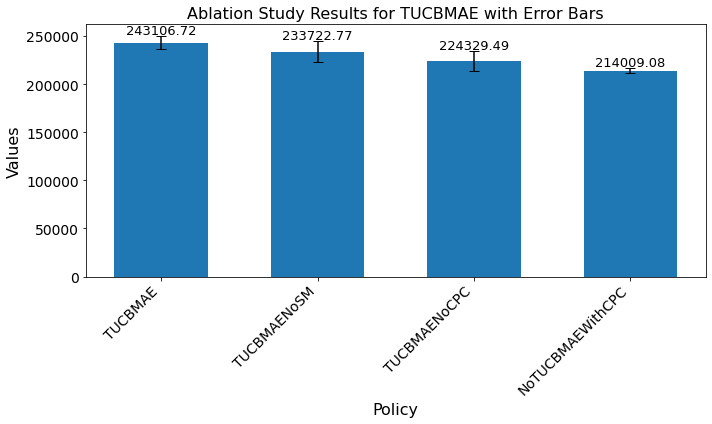}
\caption{Ablation study for TUCBMAE algorithm}
\label{ablationstudy}
\end{figure}

\subsection{Experiments on Criterio Dataset}

In order to demonstrate the compatibility of simulation environment with open source data popularly utilized in budget allocation algorithms we use criterio attribution dataset \cite{DiemertMeynet2017} with our simulation environment. This dataset does not provide a campaign structure we combine four random campaigns with ids [22589171, 884761, 18975823 and 29427842] to form a campaign group as followed in \cite{gigli2024multi}. Since the time horizon of this data is only 30 days and not expected to be non stationary we do not perform MAE change point detection. We compare the targeted UCB with saturating mean algorith with UCBGP (UCB with no change point detection) and TSGP (Thomson sampling with no change point detection). The results are reported in Table \ref{tab:criteriodataset}. The results demonstrate the the proposed strategy can lead to performance gain over UCB and TS exploration in stationary settings for standard dataset. 

\begin{table}[]
\caption{Results for opensource Criterio Attribution dataset.}
\label{tab:criteriodataset}
\resizebox{0.48\textwidth}{!}{%
\begin{tabular}{@{}cccc@{}}
\toprule
Metric & TUCB                    & TSGP                   & UCBGP                      \\ \midrule
Click  & \textbf{179447.23 $\pm$ 3991.89} & 160257.39 $\pm$ 11303.5 & 135784.005 $\pm$ 11836.84 \\ \midrule
Regret & \textbf{27499.34 $\pm$ 413.94}   & 46561.87 $\pm$ 11303.15 & 71035.26 $\pm$ 11836.84   \\ \midrule
CPC    & \textbf{353.06 $\pm$ 15.35}      & 397.71 $\pm$ 2.62       & 463.73 $\pm$ 14.02        \\ \bottomrule
\end{tabular}%
}
\end{table}

\section{Conclusion and Future Work}

The paper studies practical implication of deploying a combinatorial bandit algorithm for ad campaign budget management across multiple channels. We first construct a simulation environment capable of simulating real-logged data for long time horizon. We propose saturating mean and targeted UCB along with change point detection in combinatorial bandit for faster adaptation in non stationary environments. Our preliminary findings investigate the effects of non-stationarity in long-running digital advertising campaigns and the potential for improved adaptability. In future, we plan to formalize various types of non-stationary changes, including recurrent seasonal patterns, and further refine both the simulation environment and adaptation strategies to handle these challenges more effectively.

\begin{acks}
The authors thank Sony Biz Networks Corporation for providing data for the ad campaign from services NURO Biz and AKASHI.
\end{acks}


\balance
\bibliographystyle{ACM-Reference-Format} 
\bibliography{sample}


\begin{thebibliography}{38}


\ifx \showCODEN    \undefined \def \showCODEN     #1{\unskip}     \fi
\ifx \showDOI      \undefined \def \showDOI       #1{#1}\fi
\ifx \showISBNx    \undefined \def \showISBNx     #1{\unskip}     \fi
\ifx \showISBNxiii \undefined \def \showISBNxiii  #1{\unskip}     \fi
\ifx \showISSN     \undefined \def \showISSN      #1{\unskip}     \fi
\ifx \showLCCN     \undefined \def \showLCCN      #1{\unskip}     \fi
\ifx \shownote     \undefined \def \shownote      #1{#1}          \fi
\ifx \showarticletitle \undefined \def \showarticletitle #1{#1}   \fi
\ifx \showURL      \undefined \def \showURL       {\relax}        \fi
\providecommand\bibfield[2]{#2}
\providecommand\bibinfo[2]{#2}
\providecommand\natexlab[1]{#1}
\providecommand\showeprint[2][]{arXiv:#2}

\bibitem[\protect\citeauthoryear{??}{goo}{[n.d.]}]%
        {googleAboutAverage}
 \bibinfo{year}{[n.d.]}\natexlab{}.
\newblock \bibinfo{title}{{A}bout average daily budgets - {G}oogle {A}ds {H}elp --- support.google.com}.
\newblock \bibinfo{howpublished}{\url{https://support.google.com/google-ads/answer/6385083?hl=en}}.
\newblock
\newblock
\shownote{[Accessed 11-09-2024].}


\bibitem[\protect\citeauthoryear{Avadhanula, Colini~Baldeschi, Leonardi, Sankararaman, and Schrijvers}{Avadhanula et~al\mbox{.}}{2021}]%
        {avadhanula2021stochastic}
\bibfield{author}{\bibinfo{person}{Vashist Avadhanula}, \bibinfo{person}{Riccardo Colini~Baldeschi}, \bibinfo{person}{Stefano Leonardi}, \bibinfo{person}{Karthik~Abinav Sankararaman}, {and} \bibinfo{person}{Okke Schrijvers}.} \bibinfo{year}{2021}\natexlab{}.
\newblock \showarticletitle{Stochastic bandits for multi-platform budget optimization in online advertising}. In \bibinfo{booktitle}{\emph{Proceedings of the Web Conference 2021}}. \bibinfo{pages}{2805--2817}.
\newblock


\bibitem[\protect\citeauthoryear{Bagherjeiran, Djuric, Lee, Pang, Radosavljevic, and Rajan}{Bagherjeiran et~al\mbox{.}}{2024}]%
        {10.1145/3637528.3671476}
\bibfield{author}{\bibinfo{person}{Abraham Bagherjeiran}, \bibinfo{person}{Nemanja Djuric}, \bibinfo{person}{Kuang-Chih Lee}, \bibinfo{person}{Linsey Pang}, \bibinfo{person}{Vladan Radosavljevic}, {and} \bibinfo{person}{Suju Rajan}.} \bibinfo{year}{2024}\natexlab{}.
\newblock \showarticletitle{AdKDD 2024}. In \bibinfo{booktitle}{\emph{Proceedings of the 30th ACM SIGKDD Conference on Knowledge Discovery and Data Mining}} (Barcelona, Spain) \emph{(\bibinfo{series}{KDD '24})}. \bibinfo{publisher}{Association for Computing Machinery}, \bibinfo{address}{New York, NY, USA}, \bibinfo{pages}{6706–6707}.
\newblock
\showISBNx{9798400704901}
\urldef\tempurl%
\url{https://doi.org/10.1145/3637528.3671476}
\showDOI{\tempurl}


\bibitem[\protect\citeauthoryear{Besbes, Gur, and Zeevi}{Besbes et~al\mbox{.}}{2014}]%
        {besbes2014stochastic}
\bibfield{author}{\bibinfo{person}{Omar Besbes}, \bibinfo{person}{Yonatan Gur}, {and} \bibinfo{person}{Assaf Zeevi}.} \bibinfo{year}{2014}\natexlab{}.
\newblock \showarticletitle{Stochastic multi-armed-bandit problem with non-stationary rewards}.
\newblock \bibinfo{journal}{\emph{Advances in neural information processing systems}}  \bibinfo{volume}{27} (\bibinfo{year}{2014}).
\newblock


\bibitem[\protect\citeauthoryear{Cao, Wen, Kveton, and Xie}{Cao et~al\mbox{.}}{2019}]%
        {cao2019nearly}
\bibfield{author}{\bibinfo{person}{Yang Cao}, \bibinfo{person}{Zheng Wen}, \bibinfo{person}{Branislav Kveton}, {and} \bibinfo{person}{Yao Xie}.} \bibinfo{year}{2019}\natexlab{}.
\newblock \showarticletitle{Nearly optimal adaptive procedure with change detection for piecewise-stationary bandit}. In \bibinfo{booktitle}{\emph{The 22nd International Conference on Artificial Intelligence and Statistics}}. PMLR, \bibinfo{pages}{418--427}.
\newblock


\bibitem[\protect\citeauthoryear{Cavenaghi, Sottocornola, Stella, and Zanker}{Cavenaghi et~al\mbox{.}}{2021}]%
        {cavenaghi2021non}
\bibfield{author}{\bibinfo{person}{Emanuele Cavenaghi}, \bibinfo{person}{Gabriele Sottocornola}, \bibinfo{person}{Fabio Stella}, {and} \bibinfo{person}{Markus Zanker}.} \bibinfo{year}{2021}\natexlab{}.
\newblock \showarticletitle{Non stationary multi-armed bandit: Empirical evaluation of a new concept drift-aware algorithm}.
\newblock \bibinfo{journal}{\emph{Entropy}} \bibinfo{volume}{23}, \bibinfo{number}{3} (\bibinfo{year}{2021}), \bibinfo{pages}{380}.
\newblock


\bibitem[\protect\citeauthoryear{Chen, Wang, and Yuan}{Chen et~al\mbox{.}}{2013}]%
        {pmlr-v28-chen13a}
\bibfield{author}{\bibinfo{person}{Wei Chen}, \bibinfo{person}{Yajun Wang}, {and} \bibinfo{person}{Yang Yuan}.} \bibinfo{year}{2013}\natexlab{}.
\newblock \showarticletitle{Combinatorial Multi-Armed Bandit: General Framework and Applications}. In \bibinfo{booktitle}{\emph{Proceedings of the 30th International Conference on Machine Learning}} \emph{(\bibinfo{series}{Proceedings of Machine Learning Research}, Vol.~\bibinfo{volume}{28})}, \bibfield{editor}{\bibinfo{person}{Sanjoy Dasgupta} {and} \bibinfo{person}{David McAllester}} (Eds.). \bibinfo{publisher}{PMLR}, \bibinfo{address}{Atlanta, Georgia, USA}, \bibinfo{pages}{151--159}.
\newblock
\urldef\tempurl%
\url{https://proceedings.mlr.press/v28/chen13a.html}
\showURL{%
\tempurl}


\bibitem[\protect\citeauthoryear{D'Elia}{D'Elia}{2019}]%
        {DElia2019}
\bibfield{author}{\bibinfo{person}{Vincenzo D'Elia}.} \bibinfo{year}{2019}\natexlab{}.
\newblock \bibinfo{title}{On the causality of advertising}.
\newblock \bibinfo{howpublished}{\url{http://papers.adkdd.org/2019/invited-talks/slides-adkdd19-delia-causality.pdf}}.
\newblock
\newblock
\shownote{[Online; accessed September-2024].}


\bibitem[\protect\citeauthoryear{Deng, Hu, and Lim}{Deng et~al\mbox{.}}{2023b}]%
        {deng2023cross}
\bibfield{author}{\bibinfo{person}{Qiyuan Deng}, \bibinfo{person}{Kejia Hu}, {and} \bibinfo{person}{Yun~Fong Lim}.} \bibinfo{year}{2023}\natexlab{b}.
\newblock \showarticletitle{Cross-Channel Marketing on E-commerce Marketplaces: Impact and Strategic Budget Allocation}.
\newblock \bibinfo{journal}{\emph{Available at SSRN 4332631}} (\bibinfo{year}{2023}).
\newblock


\bibitem[\protect\citeauthoryear{Deng, Golrezaei, Jaillet, Liang, and Mirrokni}{Deng et~al\mbox{.}}{2023a}]%
        {10.5555/3618408.3618709}
\bibfield{author}{\bibinfo{person}{Yuan Deng}, \bibinfo{person}{Negin Golrezaei}, \bibinfo{person}{Patrick Jaillet}, \bibinfo{person}{Jason Cheuk~Nam Liang}, {and} \bibinfo{person}{Vahab Mirrokni}.} \bibinfo{year}{2023}\natexlab{a}.
\newblock \showarticletitle{Multi-channel autobidding with budget and ROI constraints}. In \bibinfo{booktitle}{\emph{Proceedings of the 40th International Conference on Machine Learning}} (Honolulu, Hawaii, USA) \emph{(\bibinfo{series}{ICML'23})}. \bibinfo{publisher}{JMLR.org}, Article \bibinfo{articleno}{301}, \bibinfo{numpages}{28}~pages.
\newblock


\bibitem[\protect\citeauthoryear{Diemert, Meynet, Galland, and Lefortier}{Diemert et~al\mbox{.}}{2017}]%
        {diemert2017attribution}
\bibfield{author}{\bibinfo{person}{Eustache Diemert}, \bibinfo{person}{Julien Meynet}, \bibinfo{person}{Pierre Galland}, {and} \bibinfo{person}{Damien Lefortier}.} \bibinfo{year}{2017}\natexlab{}.
\newblock \showarticletitle{Attribution modeling increases efficiency of bidding in display advertising}.
\newblock In \bibinfo{booktitle}{\emph{Proceedings of the ADKDD'17}}. \bibinfo{pages}{1--6}.
\newblock


\bibitem[\protect\citeauthoryear{{Diemert Eustache, Meynet Julien}, Galland, and Lefortier}{{Diemert Eustache, Meynet Julien} et~al\mbox{.}}{2017}]%
        {DiemertMeynet2017}
\bibfield{author}{\bibinfo{person}{{Diemert Eustache, Meynet Julien}}, \bibinfo{person}{Pierre Galland}, {and} \bibinfo{person}{Damien Lefortier}.} \bibinfo{year}{2017}\natexlab{}.
\newblock \showarticletitle{Attribution Modeling Increases Efficiency of Bidding in Display Advertising}. In \bibinfo{booktitle}{\emph{Proceedings of the AdKDD and TargetAd Workshop, KDD, Halifax, NS, Canada, August, 14, 2017}}. \bibinfo{publisher}{ACM}, \bibinfo{pages}{To appear}.
\newblock


\bibitem[\protect\citeauthoryear{Farris, Hanssens, Lenskold, and Reibstein}{Farris et~al\mbox{.}}{2015}]%
        {farris2015marketing}
\bibfield{author}{\bibinfo{person}{Paul~W Farris}, \bibinfo{person}{Dominique~M Hanssens}, \bibinfo{person}{James~D Lenskold}, {and} \bibinfo{person}{David~J Reibstein}.} \bibinfo{year}{2015}\natexlab{}.
\newblock \showarticletitle{Marketing return on investment: Seeking clarity for concept and measurement}.
\newblock \bibinfo{journal}{\emph{Applied Marketing Analytics}} \bibinfo{volume}{1}, \bibinfo{number}{3} (\bibinfo{year}{2015}), \bibinfo{pages}{267--282}.
\newblock


\bibitem[\protect\citeauthoryear{Fiandri, Metelli, and Trov{\`o}}{Fiandri et~al\mbox{.}}{2024}]%
        {fiandri2024sliding}
\bibfield{author}{\bibinfo{person}{Marco Fiandri}, \bibinfo{person}{Alberto~Maria Metelli}, {and} \bibinfo{person}{Francesco Trov{\`o}}.} \bibinfo{year}{2024}\natexlab{}.
\newblock \showarticletitle{Sliding-Window Thompson Sampling for Non-Stationary Settings}.
\newblock \bibinfo{journal}{\emph{arXiv preprint arXiv:2409.05181}} (\bibinfo{year}{2024}).
\newblock


\bibitem[\protect\citeauthoryear{Garivier and Moulines}{Garivier and Moulines}{2011}]%
        {garivier2011upper}
\bibfield{author}{\bibinfo{person}{Aur{\'e}lien Garivier} {and} \bibinfo{person}{Eric Moulines}.} \bibinfo{year}{2011}\natexlab{}.
\newblock \showarticletitle{On upper-confidence bound policies for switching bandit problems}. In \bibinfo{booktitle}{\emph{International conference on algorithmic learning theory}}. Springer, \bibinfo{pages}{174--188}.
\newblock


\bibitem[\protect\citeauthoryear{Gigli and Stella}{Gigli and Stella}{2024a}]%
        {giglimulti}
\bibfield{author}{\bibinfo{person}{Marco Gigli} {and} \bibinfo{person}{Fabio Stella}.} \bibinfo{year}{2024}\natexlab{a}.
\newblock \showarticletitle{Multi-armed bandits for performance marketing}.
\newblock \bibinfo{journal}{\emph{International Journal of Data Science and Analytics}} (\bibinfo{year}{2024}), \bibinfo{pages}{1--15}.
\newblock


\bibitem[\protect\citeauthoryear{Gigli and Stella}{Gigli and Stella}{2024b}]%
        {gigli2024multi}
\bibfield{author}{\bibinfo{person}{Marco Gigli} {and} \bibinfo{person}{Fabio Stella}.} \bibinfo{year}{2024}\natexlab{b}.
\newblock \showarticletitle{Multi-armed bandits for performance marketing}.
\newblock \bibinfo{journal}{\emph{International Journal of Data Science and Analytics}} (\bibinfo{year}{2024}), \bibinfo{pages}{1--15}.
\newblock


\bibitem[\protect\citeauthoryear{Han and Arndt}{Han and Arndt}{2021}]%
        {hanbudget}
\bibfield{author}{\bibinfo{person}{Benjamin Han} {and} \bibinfo{person}{Carl Arndt}.} \bibinfo{year}{2021}\natexlab{}.
\newblock \showarticletitle{Budget allocation as a multi-agent system of contextual \& continuous bandits}. In \bibinfo{booktitle}{\emph{Proceedings of the 27th ACM SIGKDD Conference on Knowledge Discovery \& Data Mining}}. \bibinfo{pages}{2937--2945}.
\newblock


\bibitem[\protect\citeauthoryear{Han and Gabor}{Han and Gabor}{2020a}]%
        {han2020contextual}
\bibfield{author}{\bibinfo{person}{Benjamin Han} {and} \bibinfo{person}{Jared Gabor}.} \bibinfo{year}{2020}\natexlab{a}.
\newblock \showarticletitle{Contextual bandits for advertising budget allocation}.
\newblock \bibinfo{journal}{\emph{Proceedings of the ADKDD}}  \bibinfo{volume}{17} (\bibinfo{year}{2020}).
\newblock


\bibitem[\protect\citeauthoryear{Han and Gabor}{Han and Gabor}{2020b}]%
        {hancontextual}
\bibfield{author}{\bibinfo{person}{Benjamin Han} {and} \bibinfo{person}{Jared Gabor}.} \bibinfo{year}{2020}\natexlab{b}.
\newblock \showarticletitle{Contextual bandits for advertising budget allocation}.
\newblock \bibinfo{journal}{\emph{Proceedings of the ADKDD}}  \bibinfo{volume}{17} (\bibinfo{year}{2020}).
\newblock


\bibitem[\protect\citeauthoryear{Italia, Nuara, Trov{\`o}, Restelli, Gatti, Dellavalle, et~al\mbox{.}}{Italia et~al\mbox{.}}{2017}]%
        {italia2017internet}
\bibfield{author}{\bibinfo{person}{EM Italia}, \bibinfo{person}{A Nuara}, \bibinfo{person}{Francesco Trov{\`o}}, \bibinfo{person}{Marcello Restelli}, \bibinfo{person}{N Gatti}, \bibinfo{person}{E Dellavalle}, {et~al\mbox{.}}} \bibinfo{year}{2017}\natexlab{}.
\newblock \showarticletitle{Internet advertising for non-stationary environments}. In \bibinfo{booktitle}{\emph{Proceedings of the International Workshop on Agent-Mediated Electronic Commerce}}. \bibinfo{pages}{1--15}.
\newblock


\bibitem[\protect\citeauthoryear{Kellerer, Pferschy, Pisinger, Kellerer, Pferschy, and Pisinger}{Kellerer et~al\mbox{.}}{2004}]%
        {kellerer2004multiple}
\bibfield{author}{\bibinfo{person}{Hans Kellerer}, \bibinfo{person}{Ulrich Pferschy}, \bibinfo{person}{David Pisinger}, \bibinfo{person}{Hans Kellerer}, \bibinfo{person}{Ulrich Pferschy}, {and} \bibinfo{person}{David Pisinger}.} \bibinfo{year}{2004}\natexlab{}.
\newblock \showarticletitle{The multiple-choice knapsack problem}.
\newblock \bibinfo{journal}{\emph{Knapsack problems}} (\bibinfo{year}{2004}), \bibinfo{pages}{317--347}.
\newblock


\bibitem[\protect\citeauthoryear{Lewis and Wong}{Lewis and Wong}{2022}]%
        {lewis2022incrementality}
\bibfield{author}{\bibinfo{person}{Randall Lewis} {and} \bibinfo{person}{Jeffrey Wong}.} \bibinfo{year}{2022}\natexlab{}.
\newblock \showarticletitle{Incrementality bidding and attribution}.
\newblock \bibinfo{journal}{\emph{arXiv preprint arXiv:2208.12809}} (\bibinfo{year}{2022}).
\newblock


\bibitem[\protect\citeauthoryear{Liu, Lee, and Shroff}{Liu et~al\mbox{.}}{2018}]%
        {liu2018change}
\bibfield{author}{\bibinfo{person}{Fang Liu}, \bibinfo{person}{Joohyun Lee}, {and} \bibinfo{person}{Ness Shroff}.} \bibinfo{year}{2018}\natexlab{}.
\newblock \showarticletitle{A change-detection based framework for piecewise-stationary multi-armed bandit problem}. In \bibinfo{booktitle}{\emph{Proceedings of the AAAI Conference on Artificial Intelligence}}, Vol.~\bibinfo{volume}{32}.
\newblock


\bibitem[\protect\citeauthoryear{Nguyen, Gligorijevic, Borah, Adalinge, and Bagherjeiran}{Nguyen et~al\mbox{.}}{2023a}]%
        {nguyenpractical}
\bibfield{author}{\bibinfo{person}{Phuong~Ha Nguyen}, \bibinfo{person}{Djordje Gligorijevic}, \bibinfo{person}{Arnab Borah}, \bibinfo{person}{Gajanan Adalinge}, {and} \bibinfo{person}{Abraham Bagherjeiran}.} \bibinfo{year}{2023}\natexlab{a}.
\newblock \showarticletitle{Practical Budget Pacing Algorithms and Simulation Test Bed for eBay Marketplace Sponsored Search.}. In \bibinfo{booktitle}{\emph{AdKDD@ KDD}}.
\newblock


\bibitem[\protect\citeauthoryear{Nguyen, Gligorijevic, Borah, Adalinge, and Bagherjeiran}{Nguyen et~al\mbox{.}}{2023b}]%
        {nguyen2023practical}
\bibfield{author}{\bibinfo{person}{Phuong~Ha Nguyen}, \bibinfo{person}{Djordje Gligorijevic}, \bibinfo{person}{Arnab Borah}, \bibinfo{person}{Gajanan Adalinge}, {and} \bibinfo{person}{Abraham Bagherjeiran}.} \bibinfo{year}{2023}\natexlab{b}.
\newblock \showarticletitle{Practical Budget Pacing Algorithms and Simulation Test Bed for eBay Marketplace Sponsored Search.}. In \bibinfo{booktitle}{\emph{AdKDD@ KDD}}.
\newblock


\bibitem[\protect\citeauthoryear{Nuara, Trov{\`o}, Gatti, and Restelli}{Nuara et~al\mbox{.}}{2018}]%
        {Nuara2018ACA}
\bibfield{author}{\bibinfo{person}{Alessandro Nuara}, \bibinfo{person}{Francesco Trov{\`o}}, \bibinfo{person}{Nicola Gatti}, {and} \bibinfo{person}{Marcello Restelli}.} \bibinfo{year}{2018}\natexlab{}.
\newblock \showarticletitle{A Combinatorial-Bandit Algorithm for the Online Joint Bid/Budget Optimization of Pay-per-Click Advertising Campaigns}. In \bibinfo{booktitle}{\emph{AAAI Conference on Artificial Intelligence}}.
\newblock
\urldef\tempurl%
\url{https://api.semanticscholar.org/CorpusID:19147413}
\showURL{%
\tempurl}


\bibitem[\protect\citeauthoryear{Nuara, Trov{\`o}, Gatti, and Restelli}{Nuara et~al\mbox{.}}{2022}]%
        {nuara2022online}
\bibfield{author}{\bibinfo{person}{Alessandro Nuara}, \bibinfo{person}{Francesco Trov{\`o}}, \bibinfo{person}{Nicola Gatti}, {and} \bibinfo{person}{Marcello Restelli}.} \bibinfo{year}{2022}\natexlab{}.
\newblock \showarticletitle{Online joint bid/daily budget optimization of internet advertising campaigns}.
\newblock \bibinfo{journal}{\emph{Artificial Intelligence}}  \bibinfo{volume}{305} (\bibinfo{year}{2022}), \bibinfo{pages}{103663}.
\newblock


\bibitem[\protect\citeauthoryear{PwC}{PwC}{2021}]%
        {iab1}
\bibfield{author}{\bibinfo{person}{PwC}.} \bibinfo{year}{2021}\natexlab{}.
\newblock \bibinfo{title}{IAB\_PwC\_Internet\_Ad\_Revenue\_Report\_2021}.
\newblock \bibinfo{howpublished}{\url{https://www.iab.com/wp-content/uploads/2022/04/IAB_Internet_Advertising_Revenue_Report_Full_Year_2021.pdf}}.
\newblock
\newblock
\shownote{[Accessed 26-08-2024].}


\bibitem[\protect\citeauthoryear{PwC}{PwC}{2024}]%
        {iab2}
\bibfield{author}{\bibinfo{person}{PwC}.} \bibinfo{year}{2024}\natexlab{}.
\newblock \bibinfo{title}{IAB\_PwC\_Internet\_Ad\_Revenue\_Report\_2024}.
\newblock \bibinfo{howpublished}{\url{https://www.iab.com/wp-content/uploads/2024/04/IAB_PwC_Internet_Ad_Revenue_Report_2024.pdf}}.
\newblock
\newblock
\shownote{[Accessed 26-08-2024].}


\bibitem[\protect\citeauthoryear{Rangaswamy and Van~Bruggen}{Rangaswamy and Van~Bruggen}{2005}]%
        {rangaswamy2005opportunities}
\bibfield{author}{\bibinfo{person}{Arvind Rangaswamy} {and} \bibinfo{person}{Gerrit~H Van~Bruggen}.} \bibinfo{year}{2005}\natexlab{}.
\newblock \showarticletitle{Opportunities and challenges in multichannel marketing: An introduction to the special issue}.
\newblock \bibinfo{journal}{\emph{Journal of interactive marketing}} \bibinfo{volume}{19}, \bibinfo{number}{2} (\bibinfo{year}{2005}), \bibinfo{pages}{5--11}.
\newblock


\bibitem[\protect\citeauthoryear{Re, Chiusano, Trov{\`o}, Carrera, Boracchi, and Restelli}{Re et~al\mbox{.}}{2021a}]%
        {re2021exploiting}
\bibfield{author}{\bibinfo{person}{Gerlando Re}, \bibinfo{person}{Fabio Chiusano}, \bibinfo{person}{Francesco Trov{\`o}}, \bibinfo{person}{Diego Carrera}, \bibinfo{person}{Giacomo Boracchi}, {and} \bibinfo{person}{Marcello Restelli}.} \bibinfo{year}{2021}\natexlab{a}.
\newblock \showarticletitle{Exploiting history data for nonstationary multi-armed bandit}. In \bibinfo{booktitle}{\emph{Machine Learning and Knowledge Discovery in Databases. Research Track: European Conference, ECML PKDD 2021, Bilbao, Spain, September 13--17, 2021, Proceedings, Part I 21}}. Springer, \bibinfo{pages}{51--66}.
\newblock


\bibitem[\protect\citeauthoryear{Re, Chiusano, Trov\`{o}, Carrera, Boracchi, and Restelli}{Re et~al\mbox{.}}{2021b}]%
        {10.1007/978-3-030-86486-6_4}
\bibfield{author}{\bibinfo{person}{Gerlando Re}, \bibinfo{person}{Fabio Chiusano}, \bibinfo{person}{Francesco Trov\`{o}}, \bibinfo{person}{Diego Carrera}, \bibinfo{person}{Giacomo Boracchi}, {and} \bibinfo{person}{Marcello Restelli}.} \bibinfo{year}{2021}\natexlab{b}.
\newblock \showarticletitle{Exploiting History Data for Nonstationary Multi-armed Bandit}. \bibinfo{publisher}{Springer-Verlag}, \bibinfo{address}{Berlin, Heidelberg}, \bibinfo{pages}{51–66}.
\newblock
\showISBNx{978-3-030-86485-9}
\urldef\tempurl%
\url{https://doi.org/10.1007/978-3-030-86486-6_4}
\showDOI{\tempurl}


\bibitem[\protect\citeauthoryear{Schulz, Speekenbrink, and Krause}{Schulz et~al\mbox{.}}{2018}]%
        {schulz2018tutorial}
\bibfield{author}{\bibinfo{person}{Eric Schulz}, \bibinfo{person}{Maarten Speekenbrink}, {and} \bibinfo{person}{Andreas Krause}.} \bibinfo{year}{2018}\natexlab{}.
\newblock \showarticletitle{A tutorial on Gaussian process regression: Modelling, exploring, and exploiting functions}.
\newblock \bibinfo{journal}{\emph{Journal of mathematical psychology}}  \bibinfo{volume}{85} (\bibinfo{year}{2018}), \bibinfo{pages}{1--16}.
\newblock


\bibitem[\protect\citeauthoryear{Srinivas, Krause, Kakade, and Seeger}{Srinivas et~al\mbox{.}}{2010}]%
        {10.5555/3104322.3104451}
\bibfield{author}{\bibinfo{person}{Niranjan Srinivas}, \bibinfo{person}{Andreas Krause}, \bibinfo{person}{Sham Kakade}, {and} \bibinfo{person}{Matthias Seeger}.} \bibinfo{year}{2010}\natexlab{}.
\newblock \showarticletitle{Gaussian process optimization in the bandit setting: no regret and experimental design}. In \bibinfo{booktitle}{\emph{Proceedings of the 27th International Conference on International Conference on Machine Learning}} (Haifa, Israel) \emph{(\bibinfo{series}{ICML'10})}. \bibinfo{publisher}{Omnipress}, \bibinfo{address}{Madison, WI, USA}, \bibinfo{pages}{1015–1022}.
\newblock
\showISBNx{9781605589077}


\bibitem[\protect\citeauthoryear{Trovo, Paladino, Restelli, and Gatti}{Trovo et~al\mbox{.}}{2020}]%
        {trovo2020sliding}
\bibfield{author}{\bibinfo{person}{Francesco Trovo}, \bibinfo{person}{Stefano Paladino}, \bibinfo{person}{Marcello Restelli}, {and} \bibinfo{person}{Nicola Gatti}.} \bibinfo{year}{2020}\natexlab{}.
\newblock \showarticletitle{Sliding-window thompson sampling for non-stationary settings}.
\newblock \bibinfo{journal}{\emph{Journal of Artificial Intelligence Research}}  \bibinfo{volume}{68} (\bibinfo{year}{2020}), \bibinfo{pages}{311--364}.
\newblock


\bibitem[\protect\citeauthoryear{wang, Li, and Hawbani}{wang et~al\mbox{.}}{2018}]%
        {8545777}
\bibfield{author}{\bibinfo{person}{Xingfu wang}, \bibinfo{person}{Pengcheng Li}, {and} \bibinfo{person}{Ammar Hawbani}.} \bibinfo{year}{2018}\natexlab{}.
\newblock \showarticletitle{An Efficient Budget Allocation Algorithm for Multi-Channel Advertising}. In \bibinfo{booktitle}{\emph{2018 24th International Conference on Pattern Recognition (ICPR)}}. \bibinfo{pages}{886--891}.
\newblock
\urldef\tempurl%
\url{https://doi.org/10.1109/ICPR.2018.8545777}
\showDOI{\tempurl}


\bibitem[\protect\citeauthoryear{Zhang and Vorobeychik}{Zhang and Vorobeychik}{2017}]%
        {zhang2017multi}
\bibfield{author}{\bibinfo{person}{Haifeng Zhang} {and} \bibinfo{person}{Yevgeniy Vorobeychik}.} \bibinfo{year}{2017}\natexlab{}.
\newblock \showarticletitle{Multi-channel marketing with budget complementarities}. In \bibinfo{booktitle}{\emph{International Conference on Autonomous Agents and Multiagent Systems}}.
\newblock


\end{thebibliography}


\clearpage

\section*{Appendix}

\subsection{Summary of Notations}

\begin{table}[h]
\centering
\begin{tabular}{|l|p{0.35\textwidth}|}
\hline
\textbf{Notation} & \textbf{Description} \\
\hline
$\beta_{j, t}$ & Exploration parameter for campaign $j$ at time $t$. \\
\hline
$\mathcal{R}_T(U)$ & Regret after $T$ rounds for algorithm $U$. \\
\hline
$\tilde{\mathcal{O}}(\cdot)$ & Big-O notation disregarding logarithmic factors. \\
\hline
$\hat{\mu}_{j,t-1}(x)$ & Estimated mean of campaign $j$ at time $t-1$ for budget $x$. \\
\hline
$\hat{\sigma}_{j,t-1}(x)$ & Estimated standard deviation of campaign $j$ at time $t-1$ for budget $x$. \\
\hline
$n_j(x)$ & True reward function. \\
\hline
$\hat{n}_j(x)$ & Realization of the GP representing reward of campaign $j$ at budget $x$. \\
\hline
$\gamma_T(\hat{n}_j)$ & Information gain from exploring campaign $j$ over $T$ rounds. \\
\hline
$\lambda$ & Variance of measurement noise of the reward functions $\hat{n}_j(x)$. \\
\hline
$\delta$ & Confidence parameter controlling probability bounds. \\
\hline
$M$ & Number of possible combinations of budgets explored. \\
\hline
$\mathcal{D}$ & Set of possible budgets or actions. \\
\hline
$T$ & Number of rounds or time steps. \\
\hline
$N$ & Number of campaigns or arms. \\
\hline
$S_t$ & Super-arm configuration at round $t$. \\
\hline
$a_j$ & Selected action or budget for campaign $j$. \\
\hline
$\mathcal{F}_\phi$ & A stationary phase. \\
\hline
$p_\phi$ & Break-point of phase $\mathcal{F}_\phi$. \\
\hline
$\theta_j$ & Arm efficiency. \\
\hline
\end{tabular}
\caption{Table of Notations}
\label{tab:notations}
\end{table}

\subsection{Detailed Proof}

\begin{lemma}[From \cite{10.5555/3104322.3104451}]
Given the realization of a GP $f(\cdot)$, the estimates of the mean $\hat{\mu}_{t-1}(x)$ and variance $\hat{\sigma}^2_{t-1}(x)$ for the input $x$ belonging to the input space $X$, for each $\beta \in \mathbb{R}^+$ the following condition holds:
\[
\mathbb{P} \left( \left| f(x) - \hat{\mu}_{t-1}(x) \right| \geq \sqrt{\beta} \, \hat{\sigma}_{t-1}(x) \right) \leq e^{-\frac{\beta}{2}},
\]
for each $x \in X$.
\end{lemma}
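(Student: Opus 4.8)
The plan is to exploit the defining property of Gaussian Process Regression: conditioned on the observed data (i.e.\ ``given the realization of the GP''), the posterior distribution of the latent function value $f(x)$ at any \emph{fixed} input $x \in X$ is exactly Gaussian, with mean $\hat{\mu}_{t-1}(x)$ and variance $\hat{\sigma}^2_{t-1}(x)$ given by the GPR predictive formulas stated in the preliminaries. The entire statement is therefore a one-dimensional Gaussian tail bound in disguise, and the strategy is to reduce it to a standard-normal tail estimate and then invoke a sharp Gaussian tail inequality.

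First I would standardize. Define the random variable
\[
Z := \frac{f(x) - \hat{\mu}_{t-1}(x)}{\hat{\sigma}_{t-1}(x)},
\]
which, by the Gaussian posterior just described, satisfies $Z \sim \mathcal{N}(0,1)$ for the fixed $x$. Since $\hat{\sigma}_{t-1}(x) > 0$, dividing through by it turns the event whose probability we must bound into exactly $\{|Z| \geq \sqrt{\beta}\}$, so it suffices to show that
\[
\mathbb{P}\!\left(|Z| \geq \sqrt{\beta}\right) \leq e^{-\beta/2}.
\]

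The core ingredient is the sharp one-sided Gaussian tail bound $\mathbb{P}(Z \geq r) \leq \frac{1}{2} e^{-r^2/2}$ for $r > 0$. I would prove it by a short monotonicity argument: set $g(r) := \frac{1}{2} e^{-r^2/2} - \mathbb{P}(Z \geq r)$, observe $g(0) = 0$ and $g(r) \to 0$ as $r \to \infty$, and check that $g'(r) = e^{-r^2/2}\!\left(\frac{1}{\sqrt{2\pi}} - \frac{r}{2}\right)$ is first positive and then negative, forcing $g \geq 0$ on $[0,\infty)$. Symmetry of the normal density then gives the two-sided bound $\mathbb{P}(|Z| \geq r) = 2\,\mathbb{P}(Z \geq r) \leq e^{-r^2/2}$, and substituting $r = \sqrt{\beta}$ yields the claimed $e^{-\beta/2}$. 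The bound holds pointwise for each fixed input, so it is valid uniformly over $x \in X$ as stated.

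The only delicate point is obtaining the \emph{exact} constant in the exponent's prefactor. The routine Chernoff route---using $\mathbb{E}[e^{sZ}] = e^{s^2/2}$ and optimizing at $s = r$---yields the one-sided bound $\mathbb{P}(Z \geq r) \leq e^{-r^2/2}$ \emph{without} the factor $\frac{1}{2}$, and a naive union over the two tails would then give the looser $2e^{-r^2/2}$ rather than the stated $e^{-\beta/2}$. Thus the main obstacle is purely in sharpening this constant, which the monotonicity argument above resolves; alternatively one may simply cite the standard tail estimate used in the GP-UCB literature \cite{10.5555/3104322.3104451}. Everything else is immediate once the Gaussian posterior at a fixed $x$ is recognized.
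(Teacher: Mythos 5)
Your proof is correct, and its overall skeleton matches the paper's exactly: fix $x$, standardize via $Z = (f(x) - \hat{\mu}_{t-1}(x))/\hat{\sigma}_{t-1}(x) \sim \mathcal{N}(0,1)$, establish the sharp one-sided tail bound $\mathbb{P}(Z \geq c) \leq \frac{1}{2} e^{-c^2/2}$, and double by symmetry. Where you genuinely diverge is in how that one-sided bound with the crucial factor $\frac{1}{2}$ is obtained. The paper uses the standard shift trick from the GP-UCB literature: it rewrites the integrand as $e^{-r^2/2} = e^{-c^2/2}\, e^{-(r-c)^2/2 - c(r-c)}$, bounds $e^{-c(r-c)} \leq 1$ on $r \geq c$, and recognizes the surviving integral as $\sqrt{2\pi}\,\mathbb{P}(Z > 0) = \sqrt{2\pi}/2$ --- a two-line completing-the-square computation. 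You instead prove the same inequality by a monotonicity argument on $g(c) = \frac{1}{2} e^{-c^2/2} - \mathbb{P}(Z \geq c)$: your derivative computation $g'(c) = e^{-c^2/2}\left(\frac{1}{\sqrt{2\pi}} - \frac{c}{2}\right)$ is correct, the single sign change from positive to negative together with $g(0) = 0$ and $g(c) \to 0$ as $c \to \infty$ does force $g \geq 0$ on $[0,\infty)$, and the rest follows. Both arguments are elementary and airtight; the paper's trick is shorter and is the canonical citation-friendly route, while yours avoids any clever rewriting of the integrand and makes visible exactly where the constant is tight (equality at $c = 0$). Your remark that the plain Chernoff bound only yields $\mathbb{P}(Z \geq c) \leq e^{-c^2/2}$, hence $2e^{-c^2/2}$ two-sided, correctly identifies why some sharpening device is needed --- this is precisely the gap the paper's shift trick closes. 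One minor point shared with the paper: both proofs implicitly assume $\hat{\sigma}_{t-1}(x) > 0$ when standardizing, which is guaranteed here by the observation-noise term $\sigma_n^2 > 0$ in the GPR posterior variance.
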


\begin{proof}
Let $r \sim \mathcal{N}(0,1)$ and $c \in \mathbb{R}^+$, we have:
\[
\mathbb{P}(r > c) = \frac{1}{\sqrt{2\pi}} e^{-\frac{c^2}{2}} \int_c^{\infty} e^{-\frac{(r-c)^2}{2} - c(r-c)} \, dr 
\]
\[
\leq e^{-\frac{c^2}{2}} \mathbb{P}(r > 0) = \frac{1}{2} e^{-\frac{c^2}{2}},
\]
since $e^{-c(r-c)} \leq 1$ for $r \geq c$. By the symmetry of the Gaussian distribution, we have:
\[
\mathbb{P}(|r| > c) \leq e^{-\frac{c^2}{2}}.
\]
Applying the above result to $r = \frac{f(x) - \hat{\mu}_{t-1}(x)}{\hat{\sigma}_{t-1}(x)}$ and $c = \sqrt{\beta}$ concludes the proof.
\end{proof}

\begin{proposition}
Let us consider an ABA problem over T rounds where the function $\hat{n}_j(x)$ is the realization of a GP, using TUCB-MAE algorithm with the following upper bound on the reward function $\hat{n}_j(x)$:
\[
u^{(n)}_{j,t-1}(x) := \hat{\mu}_{j,t-1}(x) + \sqrt{\beta_{j, t}} \hat{\sigma}_{j,t-1}(x)
\]
with probability at least $1 - \delta$, it holds:
\[
\mathcal{R}_T(U) = \tilde{\mathcal{O}} \left( \sqrt{TN \sum_{j=1}^{N} \gamma_T(\hat{n}_j)} \right),
\]
where the notation $\tilde{\mathcal{O}}\left( \cdot \right)$ disregards the logarithmic factors. 

\textit{Proof : } In ABA-UCB, we assume the number of clicks $\hat{n}_j(x)$ of a campaign $A_j$ is the realization of a GP over the budget space $x$. Using the selected input $a_j$ and the corresponding observations $\tilde{n}_{j,h} = \tilde{n}_j(a_j, h)$ for each $h \in \{1, \ldots, t-1\}$, the GP provides the estimates of the mean $\hat{\mu}_{j,t-1}(x)$ and variance $\hat{\sigma}_{j,t-1}^2(x)$ for each $x$. The sampling phase is based on the upper bounds on the number of rewards formally:

\[
u^{(n)}_{j,t-1}(x) := \hat{\mu}_{j,t-1}(x) + \sqrt{\beta_{j, t}} \hat{\sigma}_{j,t-1}(x), \tag{A.1}
\]

where $x$ is the cost,n denotes the round and j is the campaign.

Applying Lemma 1 to Equation (A.1) for a generic arm $a$ and $b = b_t$ we have:
\[
\mathbb{P}\left[ \left| \hat{n}_j(x) - \mu_{j,t-1}(x) \right| > \sqrt{\beta_{j, t}} \sigma_{j,t-1}(x) \right] \leq e^{-\frac{\beta_{j, t}}{2}}.
\]
In the execution of the \texttt{ABA-UCB} algorithm, after $t-1$ rounds, each arm can be chosen a number of times from $0$ to $t-1$. Applying the union bound over the rounds $\left(t \in \{1, \dots, T\}\right)$, the campaigns $\left(j \in \{1, \dots, N\}\right)$ and the available action in each campaign $\left( a \in \mathcal{D} \right)$, and exploiting Lemma (1), we obtain:
\[
\mathbb{P} \left[
\bigcup_{t \in \{1, \dots, T\}} \bigcup_{j \in \{1, \dots, N\}} \bigcup_{a \in \mathcal{D}} 
\left( \left| \hat{n}_j(x) - \mu_{j,t-1}(x) \right| > \sqrt{\beta_{j, t}} \sigma_{j,t-1}(x) \right)
\right]
\]
\[
\leq \sum_{t=1}^{T} \sum_{j=1}^{N} M e^{-\frac{\beta_{j, t}}{2}}.
\]

Where M represents the number of possible combinations of budget that the algorithm can explore. The larger the number of budget, the more difficult it becomes to explore the space effectively, hence the need for more exploration. For each time t, for each campaign j, and for each action a $\in$ D, the probability of the event occurring is bounded by the size of the action set M times the exponential decay.

Thus, choosing $\beta_{j, t} = 2k_j \log \left( \frac{\pi^2 NMt^2}{3 \delta} \right)$, where $k_j = (1 - \theta_j)$ we obtain:

\[
\sum_{t=1}^{T} \sum_{j=1}^{N} M e^{-\frac{\beta_{j, t}}{2}} 
= \sum_{t=1}^{T} \sum_{j=1}^{N} M (\frac{3 \delta}{\pi^2 NM t^2})^{k_j} 
\leq \frac{\delta}{2N} \sum_{j=1}^{N} \left( \frac{6}{\pi^2} \sum_{t=1}^{\infty} \frac{1}{t^2} \right) \leq \frac{\delta}{2}.
\]

$e^{alogb} = b^a$ and $k_j \in (0,1]$.
\\

So, the total probability of deviating significantly from the true values (across campaigns, time steps, and actions) is less than or equal to $\frac{\delta}{2}$. This ensures high probability guarantee for the entire bound over all time steps, campaigns, and actions, ensuring the algorithm's decisions are made with a high level of confidence. Therefore, the event that at least one of the upper bounds over the actual reward does not hold has probability less than $\delta$.

Assume to be in the event that all the previous bounds hold. The instantaneous pseudo-regret \( reg_t \) at round \( t \) satisfies the following inequality:
\[
reg_t = r_{\mu}^* - r_{\mu}(S_t) \leq r_{\mu}^* - r_{\bar{\mu}_t}(S_t) + r_{\bar{\mu}_t}(S_t) - r_{\mu}(S_t),
\]
where
\[
\bar{\mu}_t := \left(u_{1,t-1}^{(n)}(a_1), \dots, u_{N,t-1}^{(n)}(a_M) \right)
\]
is the vector composed of all the upper bounds of the different actions (of dimension \( NM \)).

Let us recall that, given a generic superarm \( S \), if all the elements of a vector \( \mu \) are larger than the ones of \( \mu' \), the following holds:
\[
r_{\mu}(S) \geq r_{\mu'}(S).
\]

Let us focus on the term \( r_{\bar{\mu}_t}(S_t) \). The following inequality holds:
\[
r_{\bar{\mu}_t}(S_t) \geq r_{\bar{\mu}_t}(S_{\mu}^*) \geq r_{\mu}(S_{\mu}^*) \geq r_{\mu}(S_{\mu}^*) = r_{\mu}^*,
\tag{A.3}
\]
where \( S_{\mu}^* = \arg \max_{S \in \mathcal{S}}(r_{\mu}(S)) \) is the super-arm providing the optimum expected reward when the expected rewards are \( \mu \). Thus, we have:
\[
reg_t \leq r_{\bar{\mu}_t}(S_t) - r_{\mu}(S_t).
\]

\[
\leq r_{\bar{\mu}_t}(S_t) - r_{\mu_t}(S_t) + r_{\mu_t}(S_t) - r_{\mu}(S_t),
\]
where 
\[
\mu_t := (\hat{\mu}_{1,t-1}(a_1), \dots, \hat{\mu}_{N,t-1}(a_M))
\]
is the vector composed of the estimated average rewards for each arm \( a \in \mathcal{D} \).

\[
r_{\bar{\mu}_t}(S_t) - r_{\mu_t}(S_t) = \sum_{j=1}^{N} \left( u_{j,t-1}^{(n)}(a_j,t) - \hat{\mu}_{j,t-1}(a_j,t) \right)
\]
\[
= \sum_{j=1}^{N} \left( \hat{\mu}_{j,t-1}(a_j,t) + \sqrt{\beta_{j, t}} \hat{\sigma}_{j,t-1}(a_j,t) - \hat{\mu}_{j,t-1}(a_j,t) \right)
\]
\[
= \sum_{j=1}^{N} \sqrt{\beta_{j, t}} \hat{\sigma}_{j,t-1}(a_j,t)
\]
\[
\leq \sum_{j=1}^{N} \sqrt{\beta_{j, t}} \max_{a \in \mathcal{D}} \hat{\sigma}_{j,t-1}(a)
\]

Let us focus on the term \( r_{\mu_t}(S_t) - r_{\mu}(S_t) \):

\[
r_{\mu_t}(S_t) - r_{\mu}(S_t) = \sum_{j=1}^{N} \left( \hat{\mu}_{j,t-1}(a_j,t) - \hat{n}_j(a_j,t) \right)
\]

\[
\leq \sum_{j=1}^{N} \sqrt{\beta_{j, t}} \max_{a \in \mathcal{D}} \hat{\sigma}_{j,t-1}(a)
\]

Given the UCB Gurantee.

Summing up the two terms we have:
\[
reg_t \leq 2 * \sum_{j=1}^{N} \sqrt{\beta_{j, t}} \max_{a \in \mathcal{D}} \hat{\sigma}_{j,t-1}(a)
\]

We now need to upper bound \( \hat{\sigma}_{j,t-1}(a) \).Using Lemma 5.3 in \cite{10.5555/3104322.3104451}, under the Gaussian assumption we can express the information gain provided by the observations \( n_{t-1} = (\tilde{n}_{j,1}, \dots, \tilde{n}_{j,t-1}) \) corresponding to the sequence of actions \( (a_{j,1}, \dots, a_{j,t-1}) \) as:

\[
IG(n_{t-1} | \hat{n}_j) = \frac{1}{2} \sum_{h=1}^{t-1} \log \left( 1 + \frac{\hat{\sigma}^2_{j,h}(a_j,h)}{\lambda} \right).
\]

Since \( b_h \) is non-decreasing in \( h \), we can write:

\[
\sigma^2_{j,h}(a_j,h) = \lambda \left[ \frac{\hat{\sigma}^2_{j,h}(a_j,h)}{\lambda} \right] \leq \frac{\log \left( 1 + \frac{\hat{\sigma}^2_{j,h}(a_j,h)}{\lambda} \right)}{\log \left( 1 + \frac{1}{\lambda} \right)},
\tag{A.4}
\]

since \( s^2 \leq \frac{\log \left( 1 + s^2 \right)}{\log \left( 1 + \frac{1}{\lambda} \right)} \) for all \( s \in \left[0, 1 \right], \) and \( \frac{\hat{\sigma}^2_{j,h}(a_j,h)}{\lambda} = \frac{k(a_j,h,a_j,h)}{\lambda} \leq \frac{1}{\lambda}. \)

Since Equation (A.4) holds for any \( a \in \mathcal{D} \), then it also holds for the action \( a_{\max} \) maximizing the variance \( \sigma^2_{j,h}(a_{j,h}) \) in \( \hat{n}_j \) defined over \( \mathcal{D} \). Thus, using the Cauchy-Schwarz inequality, we obtain:

\[
\mathcal{R}_T^2(U) \leq T \sum_{t=1}^{T} \text{reg}^2_t
\]

\[
\leq T \left( 2 \sum_{j=1}^{N} \sqrt{\beta_{j, t}} \max_{a \in \mathcal{D}} \hat{\sigma}_{j,t-1}(a) \right)^2
\]

\[
\leq 4 T \left\{ \sum_{j=1}^{N} \sum_{t=1}^{T} \beta_{j, t} \left[ \max_{a \in \mathcal{D}} \frac{\log \left( 1 + \frac{\hat{\sigma}^2_{j,n-1}(a)}{\lambda} \right)}{\log \left( 1 + \frac{1}{\lambda} \right)} \right] \right\}
\]

\[
\leq 4T \sum_{j=1}^{N} \sum_{t=1}^{T} \left[ \beta_{j, t} \max_{a \in \mathcal{D}} \frac{\log \left( 1 + \frac{\hat{\sigma}^2_{j,n-1}(a)}{\lambda} \right)}{\log \left( 1 + \frac{1}{\lambda} \right)} \right]
\]

As $k_j$ is between 0 and 1. 

\[
\leq 4T \sum_{j=1}^{N} \sum_{t=1}^{T} \left[ \beta_{t} \max_{a \in \mathcal{D}} \frac{\log \left( 1 + \frac{\hat{\sigma}^2_{j,n-1}(a)}{\lambda} \right)}{\log \left( 1 + \frac{1}{\lambda} \right)} \right]
\]

\[
\leq 4T\beta_{T} \sum_{j=1}^{N} \sum_{t=1}^{T} \left[  \max_{a \in \mathcal{D}} \frac{\log \left( 1 + \frac{\hat{\sigma}^2_{j,n-1}(a)}{\lambda} \right)}{\log \left( 1 + \frac{1}{\lambda} \right)} \right]
\]

\[
\leq 4T\beta_{T} \left\{ \frac{1}{\log \left( 1 + \frac{1}{\lambda} \right)} \sum_{j=1}^{N} \underbrace{
\sum_{t=1}^{T} \max_{a \in \mathcal{D}} \log \left( 1 + \frac{\hat{\sigma}^2_{j,n-1}(a)}{\lambda} \right)
}_{\gamma_T(\hat{n}_j)} \right\}
\]

\[
\leq 4T\beta_{T} \left\{ \frac{1}{\log \left( 1 + \frac{1}{\lambda} \right)} \sum_{j=1}^{N} \gamma_T(\hat{n}_j) \right\}
\]

where, \( \lambda \) is variances of the measurement noise of the reward functions \( \hat{n}_j(\cdot) \).

Equivalently, with probability at least \( 1 - \delta \), it holds:

\[
\mathcal{R}_T(U) = \tilde{\mathcal{O}} \left( \sqrt{TN \sum_{j=1}^{N} \gamma_T(\hat{n}_j)} \right),
\]

If we explore values of \( x_j \leq x_{j\max,t} \), by monotonicity, we have:
\[
\hat{n}_j(x_{j}^*) \geq \hat{n}_j(x_{j\max,t}) \geq \hat{n}_j(x_{j,t}).
\]
This means that exploring in this region incurs unnecessary regret because we are not gaining new information about potentially better actions.

\paragraph{Information Gain \( \gamma_T(\hat{n}_j) \):} The information gain measures how much we learn from exploring the actions. Exploring in regions where \( x_j \leq x_{j\max} \) leads to little or no information gain due to monotonicity, because it only confirms what is already known — that lower values of \( x \) will not perform better. Therefore, this exploration adds to regret without yielding useful information.

\paragraph{Reduced Exploration Space:} By restricting exploration to values \( x_j > x_{j\max,t} \), the effective space of arms to explore is reduced. This reduces the total information gain \( \gamma_T(\hat{n}_j) \), which in turn reduces the regret bound. Specifically, if we denote the restricted exploration space by \( X_j^+ \), we have:
\[
\gamma_T(\hat{n}_j, X_j^+) \leq \gamma_T(\hat{n}_j).
\]

Thus, the regret bound becomes:
\[
R_T(U^+) = O \left( \sqrt{TN \sum_{j=1}^{N} \gamma_T(\hat{n}_j, X_j^+)} \right),
\]
 Since \( \gamma_T(\hat{n}_j, X_j^+) \leq \gamma_T(\hat{n}_j) \), this shows that the regret is already reduced by restricting exploration to values \( x > x_{\max,t} \).

\end{proposition}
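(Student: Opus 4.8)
The plan is to follow the GP--UCB regret analysis of \cite{10.5555/3104322.3104451}, adapted to the combinatorial semi-bandit setting in which the per-round reward $r_\mu(S)$ decomposes additively over the $N$ campaigns. First I would build a single high-probability confidence event. Applying Lemma \ref{lemma1} to each campaign $j$, each round $t$, and each discretized action $a \in \mathcal{D}$ (with $|\mathcal{D}| = M$), and then taking a union bound over these three indices, the probability that some optimistic bound $u^{(n)}_{j,t-1}$ fails to dominate the true $\hat{n}_j$ is at most $\sum_{t=1}^{T}\sum_{j=1}^{N} M\,e^{-\beta_{j,t}/2}$. Choosing $\beta_{j,t} = 2k_j \log(\pi^2 N M t^2/(3\delta))$ with $k_j = 1-\theta_j \in (0,1]$ turns each summand into $M\,(3\delta/(\pi^2 N M t^2))^{k_j}$; since $k_j \le 1$ and $\sum_t t^{-2} = \pi^2/6$, this telescopes to at most $\delta/2$, so with probability at least $1-\delta$ all confidence bounds hold simultaneously.

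Conditioned on this event, I would next bound the instantaneous pseudo-regret $reg_t = r_\mu^* - r_\mu(S_t)$ using the optimism of the UCB selection. Because $S_t$ maximizes the optimistic reward $r_{\bar\mu_t}$ and, on the confidence event, every coordinate of $\bar\mu_t$ upper bounds the corresponding true mean, the coordinatewise monotonicity of $r_\mu(\cdot)$ gives $r_{\bar\mu_t}(S_t) \ge r_\mu^*$, hence $reg_t \le r_{\bar\mu_t}(S_t) - r_\mu(S_t)$. I then split this through the posterior-mean vector $\mu_t$ as $[r_{\bar\mu_t}(S_t) - r_{\mu_t}(S_t)] + [r_{\mu_t}(S_t) - r_\mu(S_t)]$. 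The first difference equals $\sum_j \sqrt{\beta_{j,t}}\,\hat{\sigma}_{j,t-1}(a_{j,t})$ by definition of the upper bound, and the second is bounded by the same quantity on the confidence event, yielding $reg_t \le 2\sum_{j=1}^{N}\sqrt{\beta_{j,t}}\,\max_{a\in\mathcal{D}}\hat{\sigma}_{j,t-1}(a)$.

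The final step converts accumulated posterior variances into information gain. Using the elementary inequality $s^2 \le \log(1+s^2)/\log(1+\lambda^{-1})$, valid because the normalized posterior variance $\hat{\sigma}^2_{j,h}(a)/\lambda$ is bounded by $\lambda^{-1}$ for the (normalized) RBF kernel, I can bound each $\max_a \hat{\sigma}^2_{j,t-1}(a)$ by a logarithmic term whose sum over $t$ is exactly the maximal information gain $\gamma_T(\hat{n}_j)$. Squaring the regret and applying Cauchy--Schwarz once over the horizon (producing the factor $T$ from $\mathcal{R}_T^2(U) \le T\sum_t reg_t^2$) and once over the $N$ campaigns (producing the factor $N$ from $(\sum_j a_j)^2 \le N\sum_j a_j^2$), then replacing $\beta_{j,t}$ by $\beta_T$ (legitimate since $k_j \le 1$), gives $\mathcal{R}_T^2(U) \le 4TN\beta_T\,(\log(1+\lambda^{-1}))^{-1}\sum_{j=1}^{N}\gamma_T(\hat{n}_j)$. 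Taking square roots and absorbing the logarithmic factor $\beta_T$ into $\tilde{\mathcal{O}}(\cdot)$ yields the claimed $\tilde{\mathcal{O}}(\sqrt{TN\sum_j \gamma_T(\hat{n}_j)})$ bound.

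I expect the main obstacle to be closing the union-bound bookkeeping cleanly with the efficiency-weighted exponent $k_j$: one must verify that $(3\delta/(\pi^2 N M t^2))^{k_j}$ still sums to at most $\delta/2$ uniformly over $k_j \in (0,1]$, not merely at $k_j = 1$, which requires care since raising a quantity below one to a smaller power increases it. For the accompanying claim that the targeted indicator $\mathbb{I}_{b > b_{j\max}}$ reduces the bound, I would invoke the monotonicity of $\hat{n}_j$ (Assumption \ref{ass2}): any $b \le b_{j\max,t}$ satisfies $\hat{n}_j(b) \le \hat{n}_j(b_{j\max,t}) \le \hat{n}_j(b_{j,t}^*)$, so sampling there cannot reveal a superior arm. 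Restricting the exploration domain to $X_j^+ = \{b : b > b_{j\max,t}\}$ therefore only removes terms from the information-gain sum, giving $\gamma_T(\hat{n}_j, X_j^+) \le \gamma_T(\hat{n}_j)$ and hence $\mathcal{R}_T(U^+) \le \mathcal{R}_T(U)$.
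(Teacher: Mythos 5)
Your proposal follows essentially the same route as the paper's own proof: the same union bound over rounds, campaigns, and the $M$ discretized actions with $\beta_{j,t} = 2k_j\log(\pi^2 NMt^2/(3\delta))$, the same optimism decomposition through $\mu_t$ giving $reg_t \le 2\sum_j \sqrt{\beta_{j,t}}\max_a\hat{\sigma}_{j,t-1}(a)$, the same information-gain conversion via $s^2 \le \log(1+s^2)/\log(1+\lambda^{-1})$, and the same monotonicity argument for $\gamma_T(\hat{n}_j, X_j^+) \le \gamma_T(\hat{n}_j)$. The two points where you are more careful than the paper are both well taken: the explicit Cauchy--Schwarz over the $N$ campaigns is in fact needed to produce the $N$ inside the square root (the paper's displayed chain silently drops it), and the concern you flag about $(3\delta/(\pi^2 NMt^2))^{k_j}$ failing to sum to $\delta/2$ when $k_j < 1$ is a genuine gap that the paper's own computation also leaves unresolved.
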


\subsection{Hyper-parameters}

The hyper-parameters used in our experiments are described in Table \ref{genhp} and \ref{campaignhyperparameters}. We only tune two hyper-parameters per campaign group which is the exploration parameter $\beta$ and the change point detection threshold $\tau$. The values are reported in Table \ref{campaignhyperparameters}. For criterio dataset since there is no change point detection we only tune $\beta = 2$. The other hyperparameters are reported in Table \ref{genhp}.

\begin{table}
\centering
\caption{General Hyperparameter}
\label{genhp}
\begin{tblr}{
  width = \linewidth,
  colspec = {Q[558]Q[294]},
  hlines,
}
Parameter & Value\\
$\epsilon$ & $\mathcal{N}$(0,0.1)\\
$window_{length}$ & 7\\
$T_p$ & 20\\
$l$ & 1.0\\
$B$ & 500
\end{tblr}
\end{table}

\begin{table}
\centering
\caption{Campaign specific hyperparameters}
\label{campaignhyperparameters}
\begin{tblr}{
  width = \linewidth,
  colspec = {Q[625]Q[156]Q[133]},
  hlines,
}
Product Type & $\beta$ & $\tau$\\
Attendance Management System & 100 & 10\\
Prediction Analysis Tool & 2 & 4\\
Internet Service Provider & 100 & 4\\
Product17276 & 2 & 10\\
Product1598 & 50 & 10
\end{tblr}
\end{table}


\end{document}